\newtheorem{theorem}{Theorem}
\newtheorem{definition}{Definition}
\def\u{{\bf u}}
\def\v{{\bf v}}
\def\b{{\bf b}}
\def\x{{\bf x}}
\def\c{{\bf c}}
\DeclareMathOperator*{\argmax}{argmax}
\title{Proximity Preserving Binary Code \\ Using Signed Graph-Cut}
\author{
Inbal Lavi,\textsuperscript{\rm 1}
Shai Avidan,\textsuperscript{\rm 1} 
Yoram Singer\textsuperscript{\rm 2} and
Yacov Hel-Or,\textsuperscript{\rm 3}
 \\
\textsuperscript{\rm 1}Department of Electrical Engineering, Tel-Aviv University, Israel \\
\textsuperscript{\rm 2}Department of Computer Science, Princeton University, NJ, USA \\
\textsuperscript{\rm 3}School of Computer Science, The Interdisciplinary Center, Israel\\
inballavi@mail.tau.ac.il, toky@idc.ac.il
}
\begin{document}

\maketitle

\begin{abstract}
We introduce a binary embedding framework, called {\em Proximity Preserving Code (PPC)}, which learns similarity and dissimilarity between data points to create a compact and affinity-preserving binary code.  This code can be used to apply fast and memory-efficient approximation to nearest-neighbor searches. Our framework is flexible, enabling different proximity definitions between data points. In contrast to previous methods that extract binary codes based on unsigned graph partitioning, our system models the attractive and repulsive forces in the data by incorporating positive and negative graph weights. The proposed framework is shown to boil down to finding the minimal cut of a signed graph, a problem known to be NP-hard. We offer an efficient approximation and achieve superior results by constructing the code bit after bit. We show that the proposed approximation is superior to the commonly used spectral methods with respect to both accuracy and complexity. Thus, it is useful for many other problems that can be translated into signed graph cut. 
\end{abstract}

\section{Introduction}

Content-based image retrieval is a fundamental problem in computer vision, media indexing, and data analysis. A common solution to the problem consists of assigning each image an indicative feature vector and retrieving similar images by defining a distance metric in the feature vector space.

One of the successful uses of deep learning is {\em data embedding}
\cite{chopra2005learning,koch2015siamese}, where a network is used to map input data into a feature vector space, satisfying some desired distance properties. This technique has many applications, such as word embedding for machine translation \cite{mikolov2013efficient},
face embedding for identity recognition \cite{taigman2014deepface,facenet,wen2016discriminative,liu2017sphereface}, and  many more.  The main idea behind data embedding is to find a mapping from input space into a vector space where the distances in the embedding space conform with the desired task. 


In a typical scenario, the embedding space is several hundreds of bytes long (e.g., 512 bytes in FaceNet \cite{facenet} embedding), and a new query may be compared to the existing images by nearest-neighbor (NN) search. 
As the number of images scales up, the memory required to store all the examples becomes too large, and the time complexity to apply NN search becomes a critical bottleneck.  

Many solutions have been proposed to mitigate this issue,  including dimensionality reduction \cite{pca} and approximate NN search \cite{flann}.
In recent years, a family of algorithms called {\em Binary Hashing} or {\em Hamming Embedding} has gained popularity. These algorithms find a mapping from a feature space into a Hamming space using a variety of methods. The main advantages of a binary representation are the significant reduction in storage and in the time required to apply vector comparisons: vectors are compared not in high-dimensional Euclidean space, but rather in the Hamming space, utilizing the extremely fast XOR operation. 
This representation is highly valuable in mobile systems, as on-device training is limited due to computational shortage. This requires ad-hoc hashing methods that can be computed on simple hardware, and that can be generalized well to novel data points.

Many modern Hamming embedding techniques are data-dependent. Data-dependent methods work by learning an affinity matrix between data points while attempting to preserve their affinities in Hamming space. {\em In-sampled} techniques aim at generating a set of binary codes, a single code for each data point, whose Hamming distances conform with the affinity matrix. {\em Out-of-sample} techniques deal with novel samples that are not known in advance. These techniques learn a general functional mapping that maps query points from feature space into Hamming space.

Affinity between data pairs can be, for example, related to the metric distances between their associated features, or semantic relations indicating data points belonging to the same semantic class. The affinity matrix is usually relaxed to {\em positive} values, where small values indicate weak proximity (far pairs), and large values indicate strong proximity (near pairs). This encourages near pairs to be located close by in the Hamming space but does not constrain the far pairs.

We propose a binary hashing framework called {\em Proximity Preserving Code} (PPC). The main contribution of our method is that the binary code is constructed based on positive and negative proximity values, representing attractive and repulsive forces. These forces properly arrange the points in the Hamming space while respecting the pairwise affinities. Our solution models this proximity as a signed graph, and the code is computed by finding the min-cut of the graph. This problem can be formulated as the {\em max-cut} problem (due to the negative values) and is known to be NP-hard \cite{alon2004approximating}.
We demonstrate that our approach is more accurate and memory-efficient as compared to state of the art graph-based embeddings.

\section{Previous Works}
Previous works in Hamming embedding can be classified into two distinct categories: data-independent and data-dependent. Data-independent methods are composed of various techniques for dimensionality reduction or techniques for dividing the N-dimensional space into buckets with equal distributions. One of the most popular data-independent hashing methods is Locality Sensitive Hashing (LSH) \cite{lsh}. LSH is a family of hash functions that map similar items to the same hash bucket with higher probability than dissimilar items. 

Data-dependent methods learn the distribution of the data in order to create accurate and efficient mapping functions. These functions are usually comprised of three elements: the hash function, the similarity measure, and an optimization criterion. Hash functions vary and include linear functions \cite{mlh}, nearest vector assignment \cite{kmh}, kernel functions \cite{bre}, neural networks \cite{lin2015deep}, and more. Similarity measures include Hamming distance and different variants of Euclidean or other compute-intensive distances that are precomputed for vector assignment \cite{pq}. Optimization criteria mainly use variants of similarity preservation and code balancing. We will focus on binary hashing methods.

An influential work in binary hashing methods is Spectral Hashing \cite{sh}. 
This method creates code words $\{\c_i\}$ that preserve the data similarity. 
By defining an affinity matrix
\(W_{ij}=exp(-\frac{{\|x_i-x_j\|}^2}{\epsilon^2})\), the authors turn the hashing problems into a minimization of 
\(\sum_{ij} W_{ij} \|\c_i-\c_j\|^2\), subject to: 
$\c_i\in\{-1,1\}^k$, $\sum_i \c_i={\bf 0}$
(code balancing), and $\frac{1}{n}\sum_i \c_i \c_i^T=I$ (independence). 
This minimization problem for a single bit can be cast as a graph partitioning problem which is known to be NP-hard. A good approximation for this problem is achieved by using spectral methods. The code is obtained by computing the $k$ eigenvectors corresponding to the smallest eigenvalues of the graph Laplacian of $W$ and thresholding them at zero.

\citeauthor{agh} \shortcite{agh} proposed the Anchor Graph Hashing, a hashing method utilizing the idea of a low-rank matrix that approximates the affinity matrix, to allow a graph Laplacian solution that is scalable both for training and out-of-sample computation. 
\citeauthor{imh} \shortcite{imh} present Inductive Manifold Hashing, a method that learns a manifold from the data and utilizes it to find a Hamming embedding. They demonstrate their results with several approaches, including Laplacian eigen-maps and t-SNE.
\citeauthor{sdh} \shortcite{sdh} and \citeauthor{dgh} \shortcite{dgh} directly optimize the discrete problem, and employ discrete coordinate descent to achieve better precision on the graph problem.
Scalable Graph Hashing with Feature Transformation \cite{sgh} uses a feature transformation method to approximate the affinity graph, allowing faster computation on large scale datasets. They also proposed a sequential approach to learn the code bit-by-bit, allowing for error-correcting of the previously computed bits.
\citeauthor{lghsr} \shortcite{lghsr} revisit the spectral solution to the Laplacian graph and propose a spectral rotation that improves the accuracy of the solutions.

All of the above approaches formulate the graph Laplacian by defining an affinity matrix that takes into account the similarities between points in the training set. However, they do not address the dis-similarity, or push-pull forces in the data set. 
In this paper, we propose a binary embedding method that employs an affinity matrix of both positive and negative values. We argue that this type of affinity better represents the relationships between data points, allowing a more accurate code generation. 
The characteristics and the advantages of this work are as follows:

\begin{itemize}
\item Our code is constructed by solving a signed graph-cut problem, to which we propose a highly accurate solution. We demonstrate that the signed graph provides a better encoding for the forces existing in the coding optimization. We show that the commonly used spectral solution, which works well in the unsigned graph-cut problems, is unnecessary, costly, and inferior in this scenario.
\item The code is computed one bit at a time, allowing for error correction during the construction of the hashing functions.  
\item We split the optimization into two steps. We first optimize for a binary vector representing the in-sample data, and then we fit the hashing functions to obtain accurate code for out-of-sample points. 

\item Our framework is flexible, allowing various proximity definitions, including semantic proximity. This can be useful for many applications, especially in low computation environments.
\end{itemize}

\section{Problem Formulation}

\begin{figure*}[htb]
\begin{subfigure}[b]{0.67\textwidth}
\includegraphics[width=0.9\linewidth]{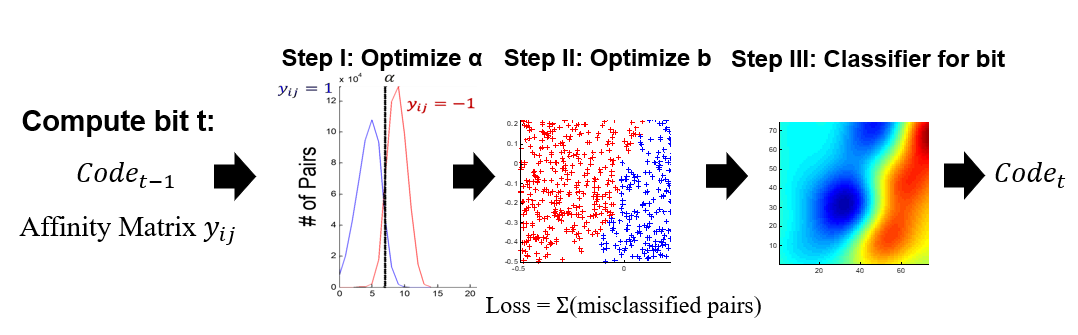}\caption{PPC Algorithm \label{fig:algo}}
\end{subfigure}
\begin{subfigure}[b]{0.3\textwidth}
\includegraphics[width=0.9\linewidth]{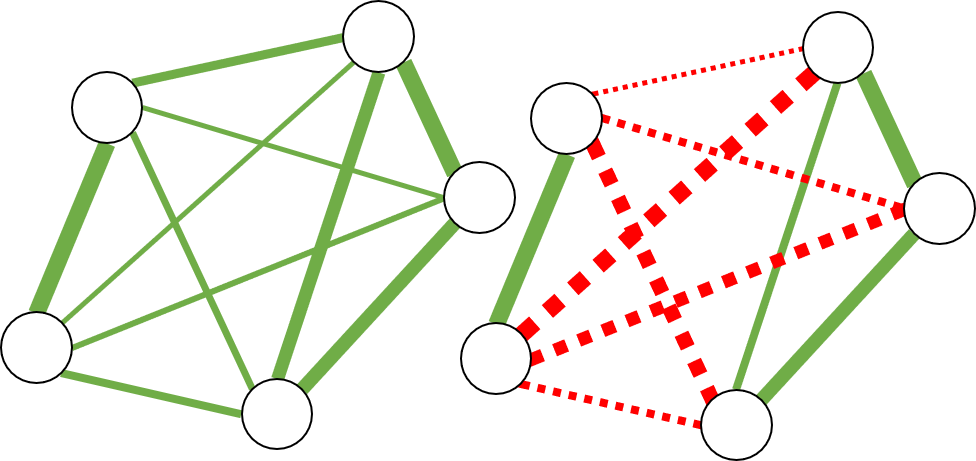}\caption{Unsigned vs. signed graph}
\end{subfigure}
\caption{(a) PPC algorithm overview. To compute the $t^{th}$ bit, we first find the optimal $\alpha$ for the existing code ($t-1$ bits), then compute a new bit $\b$ by minimizing the loss $\mathcal{E}$. For this bit, we compute a binary classifier $h^t(\x)$ that is used as the $t^{th}$ hashing function. (b) Illustration of an unsigned graph (left) vs. a signed graph (right) describing the same relations between nodes. The graph edges in green (solid line) are edges with positive weights, and the red (dashed lines) are edges with negative weights. The line thickness indicates the weight magnitude. }
\end{figure*}

We are given a set of $n$ data points $\mathcal{X}=\left \{\x_1, \x_2, \cdots, \x_n \right \}$, $\x_i \in \mathbb{R}^d$, in some
vector space, and a proximity relation between pairs of points $(i,j)\in \mathcal{S}$, where $\mathcal{S}=\{1..n\} \times \{1..n\}$. We assign each pair of points in $\mathcal{S}$ to be in the {\em Near} or {\em Far} group, according to some proximity measure. This proximity measure can have a semantic meaning, geometric meaning, or any other adjacency relation. Formally, we define \[\mathcal{N}=\left \{(i,j)~|~ \x_i~\mbox{and}~\x_j~\mbox{are in the same class} \right \}\] and \[\mathcal{F}=\left \{(i,j)~|~ \x_i~\mbox{and}~\x_j~\mbox{are in different classes} \right \}\] 
Note that $\mathcal{N}$ and $\mathcal{F}$ induce a partition
of $\mathcal{S}$ into two disjoint sets:
$\mathcal{N} \cup \mathcal{F} = \mathcal{S}$ where
$ \mathcal{N} \cap \mathcal{F} = \emptyset$. 

In a classification scenario, for example, two points belonging to the same class will be defined as {\em Near}; otherwise, they will be defined as {\em Far}.
Another example of an adjacency matrix is a neighborhood of a certain radius. 
For a distance metric $d_{ij} = d(\x_i,\x_j)$ in $\mathbb{R}^d$ and a given radius $r$ we define:
\begin{equation}
\begin{split}
\mathcal{N}=\left \{(i,j)~|~d_{ij}\leq r \right \} ~~\mbox{and}~~
\mathcal{F}=\left \{(i,j)~|~d_{ij}>r \right \}
\end{split}
\end{equation}

Denote the $p$-length binary code of point $\x_i$ by $\c_i \in \{\pm 1\}^p$. Our goal is to find $n$ binary codes $\{ \c_i \}_{i=1}^n$ that satisfy the following two requirements:

\begin{itemize}
\item {\em Compactness}: The length of the code should be short, i.e., $p$ should be as small as possible.
\item {\em Proximity Preserving}: The binary code should preserve the proximity of $\mathcal{X}$. That is, there exists a constant $\alpha$ s.t. $d_H(\c_i,\c_j) \leq \alpha$ for each pair $(i,j) \in \mathcal{N}$, and $d_H(\c_i,\c_j) > \alpha$ for each $(i,j) \in \mathcal{F}$, where $d_H(\cdot,\cdot)$ stands for the Hamming distance between two binary codes\footnote{In fact, this definition is twice the Hamming distance, but we stick with it for sake of clarity.}:
$$
d_H(\c_i,\c_j)=  \sum_{k=1}^p \left ( 1- \c_i[k] \c_j[k]
\right ) =  (p-\c_i^T \c_j).
$$
 \end{itemize}
It can be shown that if proximity relationships are determined according to $\ell_1$ or $\ell_2$ distance between
points in $\mathbb{R}^d$, the {\em Proximity Preserving} requirement can be fully satisfied using large enough codes (i.e., $p$ is large). However,
due to the { compactness} requirement, we want to relax the proximity preserving requirement and try to find an optimal code for a given code length. 

Denote a {\it proximity label}, $y_{ij}$, associated with each pair of points $(i,j) \in \mathcal{S}$:
$$
y_{ij} =
\left \{
\begin{array}{ll}
+1 ~~~\mbox{if}~~(i,j) \in \mathcal{N} \\
-1 ~~~\mbox{if}~~(i,j) \in \mathcal{F}
\end{array}
\right .
$$
For a given value $\alpha>0$ we define:
\begin{equation}
\label{eq:lij}
 z_{ij}= y_{ij} (\alpha-d_H(\c_i,\c_j))~.
\end{equation}
We would like that for each pair $(i,j),~z_{ij} \geq 0$, and accordingly we define a loss function:
\begin{equation}
\label{eq:total_loss}
l(z_{ij}) = \left \{
\begin{array}{ll}
1 & \mbox{if}~~ z_{ij} < 0 \\
0 & \mbox{otherwise}
\end{array}
\right .
\end{equation}
The empirical loss for the entire set reads:
\begin{equation}
\label{eq:lossL}
\mathcal{E}(\{y_{ij}\},\{\c_i\})= \min_{\alpha} \sum_{(i,j)\in \mathcal{S}} l(z_{ij})
\end{equation}
This loss penalizes pairs of points that are mislabeled, that is, pairs of points in $\mathcal{F}$ whose Hamming distance is smaller than $\alpha$, or pairs of points in $\mathcal{N}$ whose Hamming distance is larger than $\alpha$.

\begin{definition}[Proximity Preserving Code]
Given a set of data points $\mathcal{X}$ along with their proximity  labels, $\{y_{ij} \}$, a {\em Proximity Preserving Code} (PPC) of length $p$  is a binary code,
$\{ \c_i \}_{i=1}^n$, $\c_i \in \{\pm 1\}^p$, that minimizes
$\mathcal{E}(\{y_{ij}\},\{\c_i\})$.
\label{def:ppc}\end{definition}

 In the following we describe the procedure to generate the PPC. In particular, we show that finding PPC for a given set of points boils down to applying an integer low-rank matrix decomposition.
We provide two possible approximated solutions and show their connection to the minimum signed graph-cut problem. Finally, we provide a solution for extracting hashing functions for out-of-sample data points.

\section{Proximity Preserving Code}
\label{sect:ppc}
Recall the definition of $z_{ij}$  (Equation~\ref{eq:lij}): 
$z_{ij}= y_{ij} (\alpha-d_H(\c_i,\c_j))$.
Substituting the Hamming distance into this expression we get:
\begin{equation}
\label{eq:z_ij2}
z_{ij}= y_{ij}
\left(\alpha -
\left ( p-\c_i^T \c_j \right ) \right) = y_{ij} \left ( \c_i^T \c_j - \beta  \right )
\end{equation}
where we define $\beta=p-\alpha$.

To simplify notations we define a {\it code matrix} $C\in \{\pm 1\}^{p \times n}$ by
stacking the code words along its columns:
$$
C= \left (
\begin{matrix}
\mid & \mid &  & \mid \\
\c_1 & \c_2 & \cdots & \c_n \\ 
\mid & \mid &  & \mid 
\end{matrix}
\right )
$$
Similarly we define
$$
{B}=C^T C  ~~~~\mbox{where}~~ {B}_{ij} = \c_i^T \c_j
$$
Equation~\ref{eq:z_ij2} can now be defined over the entries of matrix ${B}$:
$$
z_{ij} = y_{ij}({B}_{ij}-\beta)
$$
and the total loss (Equation~\ref{eq:lossL}) is:
\begin{equation}
\label{eq:loss2}
\mathcal{E}(\{y_{ij}\},{B})=\min_{\beta} \sum_{ij} l(z_{ij})
\end{equation}
Denote by ${\b^k}$ the {\bf rows} of $C$ (similarly, the columns of $C^T$) such that 
$$
C^T = \left (
\begin{matrix}
\mid & \mid & & \mid  \\
\b^1 & \b^2 & \cdots & \b^p \\ 
\mid & \mid &  & \mid 
\end{matrix}
\right )
$$

Each  $\b^k \in \{\pm 1\}^n$ is a vector representing the $k^{th}$ bit of all the code words ($n$ words). The matrix ${B}=C^T C$ can now be represented as a linear sum
of $n \times n$ matrices:
\begin{equation}
{B} = \sum_{k=1}^p \b^k {\b^k}^T=\sum_{k=1}^p B^k
\label{eq:B}
\end{equation}
where $B^k = \b^k{\b^k}^T$ is a rank-1 matrix extracted from the $k^{th}$ bit of the code words. Thus, each additional bit can either increase the rank of matrix $B$ or leave it the same. 
Our goal then is to
find a low rank matrix ${B}=C^T C$,  minimizing the
loss defined in Equation~\ref{eq:loss2}.

The minimization function defined in Equation \ref{eq:loss2} introduces a combinatorial problem which is NP-hard. Therefore we relax the binary loss function and re-define it using a logistic loss function:
$$
l(z_{ij})=\tilde \ell(y_{ij}({B}_{ij}-\beta) )
$$
where $\tilde \ell(z)=\ln(1+e^{-z})$.
The relaxed total loss is therefore
\begin{equation}
\label{eq:loss3}
\mathcal{E}(\{y_{ij} \},{B})= \min_{\beta} \sum_{ij} \tilde \ell(y_{ij}({B}_{ij}-\beta) )
\end{equation}


\subsection{Bit Optimization}

In the proposed process we generate the codes for $n$ data points in a sequential manner, bit after bit. In the following we detail the minimization process for bit $k$. This is also illustrated in Figure \ref{fig:algo}.

At this step we assume that $k-1$ bits of PPC code have already been generated. 
Denote 
$${B}^{1:k}=\sum_{\ell=1}^{k} B^\ell~~\mbox{where}~~B^\ell = \b^\ell {\b^\ell}^T.
$$
For the $k^{th}$ bit, we minimize  Equation~\ref{eq:loss3} with respect to ${B}^k$ and $\beta$ as follows:
\begin{equation}
\label{eq:loss4}
\mathcal{E}^k= \sum_{ij} \tilde \ell(y_{ij}({B}^{1:k-1}_{ij} + {B}^k_{ij}-\beta) )
\end{equation}
Note that $B_{ij}^{1:k-1}$ is already known at step $k$. As mentioned above, $\mathcal{E}^k$ is minimized using alternate minimization, described below. \\

\noindent
\underline{Step I - optimizing $\beta$}:\\
$\mathcal{E}^k$ is convex with respect to $\beta$, so any scalar search is applicable here. 
Since the loss $\tilde \ell(z)$ is nearly linear for $z\le0$, a fast yet sufficiently accurate
approximation for $\beta$ is to choose the
value that equates the number of misclassified pairs in the $\mathcal{N}$ and $\mathcal{F}$ sets.
For the current code $\{\c_i\}_{i=1}^n,~\c_i \in \{\pm 1\}^{k-1}$, and a constant value $\alpha$, define the misclassified sets:
$$
E_N(\alpha) = \left \{ (i,j) ~|~ (i,j) \in \mathcal{N}\ ~\mbox{and}~
d_H(\c_i,\c_j)>\alpha \right \}
$$
and similarly
$$
E_F(\alpha) = \left \{ (i,j) ~|~ (i,j) \in F ~\mbox{and}~ d_H(\c_i,\c_j) \le
\alpha \right \} 
$$
The value of $\alpha$ is set such that the cardinality of the two sets is equal, i.e., the $\hat \alpha$ that satisfies:
\begin{equation}
|E_N(\hat \alpha)|=|E_F(\hat \alpha)|
\label{eq:beta-eq}\end{equation}
and accordingly $\hat \beta =  (t-1) - \hat \alpha$.
This is visualized in Step I of Figure \ref{fig:algo}. We show a histogram of the near pairs of samples in blue and the far pairs in red, and $\alpha$ is the vertical black line thresholding the Hamming distance.
\\

\noindent
\underline{Step II - optimizing $\b^k$}: \\
For the evaluated $\hat \beta$,
Equation~\ref{eq:loss4} becomes:
\begin{gather*}
\mathcal{E}^k = \sum_{ij} \tilde \ell(y_{ij}({B}^{1:k-1}_{ij} + B^k_{ij}-\hat \beta) )\\
= \sum_{ij} \tilde \ell(y_{ij}(B^k_{ij}+\gamma^{k-1}_{ij}) )
\end{gather*}
where we define $\gamma^{k-1}_{ij} = {B}^{1:k-1}_{ij}- \hat \beta$. In a forward
greedy selection process, we approximate the potential decrease in the loss
using the gradient. Our goal is to find $B^{k}$ that minimizes $\mathcal{E}^k \approx \mathcal{E}^{k-1} + \Delta \mathcal{E}^k$ or alternatively maximizes $-\Delta \mathcal{E}^k$ where:
$$
-\Delta \mathcal{E}^k=-\sum_{ij} \frac{\partial \mathcal{E}^{k}}{\partial B^{k}_{ij}}B^k_{ij}
= -\sum_{ij}  \tilde \ell'(y_{ij} \gamma_{ij}^{k-1}) y_{ij} B^k_{ij}
$$
where $\tilde \ell'$ stands for the derivative of the logistic loss function
$\tilde {\ell}'(z)=-1/(1+e^z)$.

Defining $w_{ij}=-y_{ij} \tilde \ell'(y_{ij}\gamma^{k-1}_{ij})$, we arrive at the following maximization problem:
$$
\max_{B^k} \sum_{ij}  w_{ij} B^k_{ij}  = \max_{\b^k} \sum_{ij}  w_{ij} \b^k[i] \b^k[j]
$$
where the maximization is taken over all entries of $\b^k \in \{\pm 1\}^n$.
For the sake of simplicity we omit the superscript $k$ and denote $\b^k$ by $\b$.  Collecting $\{w_{ij}\}$ into matrix $W$, s.t. $W(i,j)=w_{ij}$, the above maximization can be simply expressed in a matrix form:
\begin{equation}
\label{eq:bWb}
\hat \b = \argmax_{\b} \b^T  W \b~~~~~\mbox{s.t.}~~\b \in \{\pm 1\}^n
\end{equation}


If the weight matrix W was all positive (all entries are positive values), this problem can be interpreted as a graph min-cut problem.
 In our problem, however, the matrix W is comprised of both positive and negative values, indicating
pairs $(i,j)$ that are properly and improperly assigned as near or far according to the code computed. This is termed in the literature a {\em signed min-cut} problem which is equivalent to the {\em max-cut} problem whose solution is NP-hard.

In the proposed solution we start with an initial guess for the bit vector $\b$ and improve it by using a forward greedy selection scheme. We present two iterative approaches for the selection scheme: {\em vector update} and {\em bit update}.

\subsubsection{Vector Update}
Given an initial guess for $\b$, the {\em vector update} method updates the entire vector at once. At each iteration the vector is improved by applying: 
\[{\b'}=sign(W \b)\] 
where $\b'$ is the updated vector that satisfies: $\b'^T W \b' \geq  \b^T W  \b$. The following four theorems prove that $\b'$ is a better vector than $\b$:

\begin{theorem}\label{theo:bwb}
For any $n \times n$ matrix $W$ and $\b,\b' \in \{\pm 1\}^n$, ~if ~$\b'=sign(W \b)$,  then $\b'^T W \b \geq \b^T W \b$
\end{theorem}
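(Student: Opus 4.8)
The plan is to reduce both sides of the claimed inequality to quantities expressed through the single vector $\v := W\b$, and then compare them coordinate by coordinate. First I would write $\b'^{T} W \b = \b'^{T}\v = \sum_{i=1}^{n}\b'[i]\,\v[i]$. Since $\b'[i]=sign(\v[i])$, each summand equals $|\v[i]|$; and if some coordinate $\v[i]=0$, then whatever value in $\{\pm 1\}$ the convention assigns to $sign(0)$, the corresponding summand is still $0=|\v[i]|$. Hence $\b'^{T} W \b = \sum_{i=1}^{n}|\v[i]|$.

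Second, I would bound the other side: $\b^{T} W \b = \b^{T}\v = \sum_{i=1}^{n}\b[i]\,\v[i] \le \sum_{i=1}^{n}|\b[i]|\,|\v[i]| = \sum_{i=1}^{n}|\v[i]|$, where the inequality is the triangle inequality applied termwise and the last equality uses $|\b[i]|=1$ for every $i$. Combining the two expressions gives $\b^{T} W \b \le \sum_{i=1}^{n}|\v[i]| = \b'^{T} W \b$, which is exactly the statement.

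There is no genuinely hard step here: everything rests on the elementary identity $\langle sign(\v),\v\rangle = \sum_i |\v[i]|$ together with the bound $\langle \b,\v\rangle \le \sum_i |\v[i]|$ valid for any sign vector $\b\in\{\pm 1\}^n$. The only point that deserves an explicit word is the handling of the zero coordinates of $W\b$ (i.e., the definition of $sign(0)$), but as noted this is immaterial to the inequality. I would also remark, as a sanity check on the generality, that symmetry of $W$ is nowhere used in this particular theorem — only the vector $W\b$ enters the argument — which is consistent with the statement being made for arbitrary $n\times n$ matrices.
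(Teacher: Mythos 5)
Your proof is correct and follows essentially the same route as the paper's: reduce both sides to inner products with $\u = W\b$ and observe that the sign vector maximizes $\sum_i \b'[i]\u[i]$ over $\{\pm 1\}^n$. Your version is slightly more explicit (spelling out the $|\u[i]|$ identity, the termwise bound, and the $sign(0)$ convention), but the underlying argument is identical.
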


\begin{theorem}
\label{alternate}
Assuming  $ W $  is Positive Semidefinite, if 
$ {\b'}^TW \b > \b^TW \b $, then 
$ {\b'}^TW{\b'}>\b^TW\b $.
\end{theorem}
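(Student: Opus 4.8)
The plan is to deduce everything from the positive semidefiniteness of $W$ via the single inequality $(\b'-\b)^T W (\b'-\b) \ge 0$. First I would observe that the quadratic form $\x^T W \x$ depends only on the symmetric part of $W$, so we may assume $W=W^T$ (and PSD is normally taken to include symmetry anyway); this is the only structural fact needed, and it lets us use $\b'^T W \b = \b^T W \b'$ freely.

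Next, expanding $(\b'-\b)^T W (\b'-\b)\ge 0$ and applying symmetry gives
$$\b'^T W \b' - 2\,\b'^T W \b + \b^T W \b \;\ge\; 0,$$
i.e. $\b'^T W \b' \ge 2\,\b'^T W \b - \b^T W \b$. Now I invoke the hypothesis $\b'^T W \b > \b^T W \b$: it forces $2\,\b'^T W \b - \b^T W \b > 2\,\b^T W \b - \b^T W \b = \b^T W \b$, and chaining this with the previous line yields $\b'^T W \b' > \b^T W \b$, which is exactly the claim.

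The argument collapses to a one-liner once the PSD inequality is written down, so there is no real obstacle. The only subtlety worth flagging is where strictness comes from: semidefiniteness alone gives only ``$\ge$'', and it is precisely the strict inequality assumed in the hypothesis that promotes it to the strict ``$>$'' in the conclusion, so no positive \emph{definite} assumption on $W$ is required. Read together with Theorem~\ref{theo:bwb} (which guarantees $\b'^T W \b \ge \b^T W \b$ whenever $\b' = sign(W\b)$), this shows the vector update $\b \mapsto sign(W\b)$ never decreases the objective $\b^T W \b$, and strictly increases it whenever the intermediate bilinear term does — establishing monotonic progress of the update scheme.
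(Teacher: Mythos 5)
Your proof is correct, but it takes a genuinely different route from the paper. The paper diagonalizes $W$, writes $\b=\sum_i\alpha_i\u_i$, $\b'=\sum_i\alpha'_i\u_i$, rescales the coefficients by $\sqrt{\lambda_i}$ to form vectors $\v,\v'$, and then applies the Cauchy--Schwarz inequality (labelled there as the ``triangular inequality'') to get $\|\v\|\,\|\v'\|\ge\v^T\v'\ge\|\v\|^2$, hence $\|\v'\|\ge\|\v\|$, i.e.\ $\b'^TW\b'\ge\b^TW\b$. You instead expand the single PSD inequality $(\b'-\b)^TW(\b'-\b)\ge0$ and combine it with the hypothesis, which is shorter, avoids the spectral theorem and Cauchy--Schwarz entirely, and --- importantly --- tracks strictness cleanly: the paper's displayed chain is written with ``$\ge$'' throughout and as stated only delivers ``$\ge$'' in the conclusion, whereas your argument makes explicit that the strict hypothesis $\b'^TW\b>\b^TW\b$ is what produces the strict conclusion (with the paper's method one must additionally note $\v^T\v'>\|\v\|^2\ge0$ forces $\|\v'\|>\|\v\|$). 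Both arguments need $W$ symmetric (the paper uses it implicitly through the orthogonal eigendecomposition; you use it to identify $\b^TW\b'$ with $\b'^TW\b$), and your caveat is the right one: your reduction ``replace $W$ by its symmetric part'' is justified for the quadratic forms but not automatically for the bilinear term in the hypothesis, so the honest statement is simply that PSD here is taken to include symmetry --- which holds for the paper's $W$, since $w_{ij}$ is symmetric in $(i,j)$ and Theorem~\ref{theo:making-PD} is applied to a symmetric matrix.
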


\begin{theorem}\label{theo:making-PD}
Any symmetric matrix $W$ can become Positive Semidefinite by applying $W\leftarrow W+ |\lambda| I$ where $\lambda$ is the smallest eigenvalue of $W$.
\end{theorem}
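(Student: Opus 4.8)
The plan is to reduce the claim to the spectral theorem for symmetric matrices. First I would write $W = Q \Lambda Q^T$ with $Q$ orthogonal and $\Lambda = \mathrm{diag}(\lambda_1,\dots,\lambda_n)$ collecting the (necessarily real) eigenvalues, ordered so that $\lambda = \lambda_1 = \min_i \lambda_i$. Then $W + |\lambda| I = Q(\Lambda + |\lambda| I)Q^T$, so the eigenvalues of the shifted matrix are exactly $\lambda_i + |\lambda|$, and it suffices to check that each of these is nonnegative.

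The second step is an elementary case analysis on the sign of $\lambda$. If $\lambda \ge 0$, then every $\lambda_i \ge \lambda \ge 0$ and $|\lambda| \ge 0$, so $\lambda_i + |\lambda| \ge 0$. If $\lambda < 0$, then $|\lambda| = -\lambda$ and $\lambda_i + |\lambda| = \lambda_i - \lambda_1 \ge 0$, since $\lambda_i \ge \lambda_1$ for all $i$. Either way all eigenvalues of $W + |\lambda| I$ are nonnegative, and for a symmetric matrix this is equivalent to positive semidefiniteness, which proves the theorem.

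Alternatively, and avoiding even naming the eigenvectors, I would argue directly with quadratic forms: for any $\v \in \mathbb{R}^n$ the Rayleigh-quotient bound gives $\v^T W \v \ge \lambda_{\min}\|\v\|^2 = \lambda\|\v\|^2$, hence $\v^T(W + |\lambda| I)\v = \v^T W \v + |\lambda|\,\|\v\|^2 \ge (\lambda + |\lambda|)\|\v\|^2 \ge 0$, the last inequality holding because $\lambda + |\lambda| \ge 0$ for every real $\lambda$.

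There is essentially no hard step here; the only things worth care are that the shift must use exactly the smallest eigenvalue (a strictly smaller shift could leave $\lambda_1 + |\lambda| < 0$ when $\lambda_1 < 0$), and that $\lambda + |\lambda| \ge 0$ regardless of the sign of $\lambda$, which is what makes the single uniform rule $W \leftarrow W + |\lambda| I$ work in both cases. I would also note, to connect back to the context, that for $\b \in \{\pm 1\}^n$ the substitution changes the objective by the constant $\b^T(|\lambda| I)\b = |\lambda|\,n$, so it leaves the $\argmax$ in Equation~\ref{eq:bWb} unchanged while rendering $W$ Positive Semidefinite, exactly as needed to invoke Theorem~\ref{alternate}.
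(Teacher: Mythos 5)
Your proof is correct, and it takes a different route from the paper's. You diagonalize $W=Q\Lambda Q^T$ (or, equivalently, use the Rayleigh-quotient bound $\v^T W\v\ge\lambda\|\v\|^2$) and observe that the eigenvalues of $W+|\lambda|I$ are exactly $\lambda_i+|\lambda|\ge 0$; this is the standard spectral-shift argument and it is airtight, giving the conclusion with the tight shift $|\lambda|$. The paper instead appeals to the Gershgorin Circle Theorem: it notes that all eigenvalues lie in disks $\{v:|v-w_{ii}|\le R_i\}$ with $R_i=\sum_{j\ne i}|w_{ij}|$ and asserts that adding $|\lambda|$ to the diagonal pushes the disks into the nonnegative half-line. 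That deduction is looser than yours: Gershgorin only guarantees nonnegative disks if the shift is at least $\max_i(R_i-w_{ii})$, which can exceed $|\lambda|$, so the paper's inference that the $|\lambda|$-shift suffices does not actually follow from Gershgorin alone (it is the spectral-shift fact, which you prove, that makes the theorem true). In short, your argument is both simpler and more rigorous; the paper's buys only an elementary-sounding tool at the cost of a gap. Your closing remark that the shift changes the objective $\b^T W\b$ over $\b\in\{\pm1\}^n$ by the constant $|\lambda|\,n$ is exactly the content of Theorem~\ref{PD}, so it is a nice consistency check but not needed for this statement.
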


\begin{theorem}\label{PD}
Adding a constant value to the diagonal of the weight matrix W will not affect the output code computed.
\end{theorem}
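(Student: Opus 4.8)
The plan is to observe that the constraint set $\{\pm 1\}^n$ trivializes the effect of a diagonal shift on the quadratic form in Equation~\ref{eq:bWb}, so that the optimization problem — and therefore the code it produces — is literally unchanged. \emph{Step 1 (the key identity).} For any $\b \in \{\pm 1\}^n$ each coordinate satisfies $\b[i]^2 = 1$, hence $\b^T\b = n$. Consequently, for any scalar $c$,
\[
\b^T(W + cI)\,\b \;=\; \b^T W\b + c\,\b^T\b \;=\; \b^T W\b + cn ,
\]
so replacing $W$ by $W + cI$ adds the same constant $cn$ to the objective value of every feasible $\b$.

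\emph{Step 2 (invariance of the optimum).} An additive constant that does not depend on $\b$ preserves the ordering of the feasible points by objective value; in particular it does not change the set of maximizers of Equation~\ref{eq:bWb}. Since each bit $\b^k$ is chosen as such a maximizer and the code matrix $C$ is the stacking of these bits, the code obtained from $W + cI$ coincides with the code obtained from $W$. More generally, for any two candidate codes the sign of the difference of their objective values is unchanged, so any procedure that ranks codes by the value of $\b^T W\b$ returns the same answer.

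\emph{Step 3 (why this matters, and the point to be careful about).} This invariance is exactly what licenses Theorem~\ref{theo:making-PD}: we may add $|\lambda| I$ to make $W$ Positive Semidefinite — so that Theorems~\ref{theo:bwb} and~\ref{alternate} guarantee the vector update $\b' = \mathrm{sign}(W\b)$ improves the objective — without perturbing the solution we are actually after. There is essentially no obstacle here beyond being precise about two things: first, the identity $\b^T\b = n$ is special to the hypercube (it would fail over $\mathbb{Z}^n$ or $\mathbb{R}^n$), so the argument must invoke the binary constraint; and second, ``output code'' must be read as the optimizer of Equation~\ref{eq:bWb} (equivalently, the winner of any objective-value comparison), since an individual iterate $\mathrm{sign}((W+cI)\b)$ need not coincide with $\mathrm{sign}(W\b)$. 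The substance of the theorem is the one-line computation of Step~1.
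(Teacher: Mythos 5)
Your proof is correct and uses exactly the same argument as the paper: since $\b \in \{\pm 1\}^n$ forces $\b^T\b = n$, the diagonal shift adds a constant to the objective of Equation~\ref{eq:bWb}, leaving the argmax unchanged. Your additional remarks (the role of the hypercube constraint and the caveat about individual sign-update iterates) are sensible but go beyond what the paper's one-line computation requires.
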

Proofs are given in the Appendix.
Using the above theorems, Algorithm~ \ref{alg1} summarizes the {\em vector update} iterations.

\begin{algorithm}[h]
\caption{Vector Update (${\b},W$)}
\label{alg1}
\begin{algorithmic} 
\State $ \hat \b \gets {\b}$ 
\State $W \leftarrow W+|\lambda| I$ where $\lambda$ is the smallest e.v. of $W$
\Repeat 
	\State $ \b \gets {\hat \b}$ 
	\State ${\hat \b} \gets sign(W\b)$
\Until{ ${\hat \b} = \b $}
\State
\Return {$\b$}
\end{algorithmic}
\end{algorithm}

\subsubsection{Bit Update}
Unlike the {\em vector update}, the {\em bit update} method changes one bit at a time: For each bit in vector $\b$, we flip the bit and determine whether the new value improved the objective $\b^T W  \b$.
This is repeated for each bit sequentially, and over the entire vector, until convergence. 
This procedure can be applied very efficiently using the following scheme:  Define $ \b=\b_{(i)}+\b_{(-i)}$, where $\b_{(i)}=(0... b_{i}...0)$ is a one-hot vector with  the $i^{th}$ entry of $\b$ at the $i^{th}$ coordinate. Accordingly, $\b_{(-i)}=\b -  \b_{(i)}$ is   the vector $\b$ with $0$ at the $i^{th}$ coordinate. When optimizing the $i^{th}$ bit:
\begin{eqnarray*}
\b^T W \b = (\b_{(i)}+{\b_{(-i)})}^TW( \b_{(i)}+ \b_{(-i)})= \\
= \b_{(i)}^2 W(i,i)+2{ \b^T_{(-i)}} W \b_{(i)}+const
\end{eqnarray*}
It can be verified that the only term affecting the optimization is \({\b^T_{(-i)}} W \b_{(i)}\). Therefore we can optimize each bit in $\b$ by looking at the value of the $i^{th}$ element of ${\b^T_{(-i)}} W$. Subsequently, the only elements affecting this value in the matrix $W$ are in the $i^{th}$ column of $W$. Thus,
 \begin{equation}
 \label{eq:fgss}
\b[i]' =  sign\left ( { \b^T_{(-i)}} W[:,i] \right)
\end{equation}
where we denote by $W[:,i]$ the $i^{th}$ column of W. 
We apply this optimization scheme for each bit sequentially, and repeatedly over the entire vector $\b$, until convergence. Each bit update is inserted immediately into $\b$ so that the optimization for the $i+1$ bit will account for the preceding bits that have been calculated. 
This update is computationally inexpensive, requiring $O(n)$ operations for each bit update, and $O(n^2)$ operations for one round over the entire $\b$. 
This method is summarized in Algorithm~\ref{alg2}.

\begin{algorithm}
\caption{Bit Update ($\b,W$)}
\label{alg2}
\begin{algorithmic} 
\Repeat 
	\State $\hat{\b} \gets \b$
	\For {$i\gets 1, N$}
		\State $ \b_{-i} \gets \b, ~~~~  \b_{-i}[i] \gets 0$
		\State $\b[i] \gets sign(\b_{-i}^T W[:,i])$
	\EndFor
\Until{ $\hat{\b} = \b $}
\State
\Return {$\b$}
\end{algorithmic}
\end{algorithm}

The two algorithms presented for the iterative bit optimization scheme provide a solution to the max-cut problem where both positive and negative weights appear on the graph edges. The iterations require several light computations and stop when a local maximum is reached and the iteration scheme can no longer improve upon the current bit vector. We show in the Experiments Section that the bit update scheme achieves better codes than the vector update and is therefore preferable.

\subsection{Initial Guess}
Our method is based on an iterative scheme. Therefore, we start the optimization with an initial guess and improve upon it.
A common solution is to relax the constraints $\b[i] \in \pm 1$ and allow real-valued solutions. This enables the maximization problem to be cast as an eigenvalue problem. The final solution is then obtained by thresholding the results, in a similar manner to \citeauthor{sh} \shortcite{sh}.  Interestingly, we have found that starting from a {\em random guess} and applying the suggested iterations produces {more accurate solutions} and with {much faster} compute time than the traditional eigenvalue solutions.

In conclusion, the algorithm provided above can be used to solve the signed graph min-cut problem where $W$ consists of both positive and negative weights. We show empirically that our solution is equivalent to or outperforms other methods, by starting from a random guess solution and applying an update scheme until convergence. Note, that the proposed update schemes can improve upon any approximated solution suggested in the literature, as the suggested iterations do not deteriorate and can only improve the objective function. Our evaluations and experimental results are provided in the Experiments Section.

\section{Signed Graph Min-Cut Problem}
\label{sect:signedCut}
Equation \ref{eq:bWb} suggests that our problem can be cast as a signed graph min-cut problem. A weighted graph is represented by a vertex set $V=\{1,\cdots,n\}$ and weights $W_{ij}=W[i,j]=W[j,i]$ for each pair of its vertices $(i,j)\in V \times V$. 
The weight of the minimum cut $w(G,\bar G)$ is given by the following problem:
\begin{equation}
\label{eq:int_prog}
	Minimize ~~~~\frac{1}{2} \sum_{i,j} W_{ij} (1-b_i b_j)~~s.t.~~b_i\in \{\pm 1\}.
\end{equation}
where $\b=[b_1,\cdots,b_n]$ is an indicator vector s.t. $b_i=1$ if $i\in G$ and $b_i=-1$ if $i\in \bar G$. The above minimization is an integer quadratic program whose solution is known to be NP-hard \cite{alon2004approximating}. Note that the above formulation can be expressed similarly by $maximize~~~\b^t W \b~~s.t.~~\b \in \{\pm 1\}^n$, which is similar to the expression given in Equation \ref{eq:bWb}.
The weights collected in Equation~\ref{eq:int_prog} refer only to pairs $(i,j)~s.t.~~b_i \neq b_j$. Thus, the minimal cut aims at including as many negative weights as possible while excluding positive weights. Since we are dealing with signed graphs, balancing the cut is not critical as it is in unsigned graphs since cutting a small component with few edges does not necessarily provide the smallest cut. 

\citeauthor{alon2004approximating} \shortcite{alon2004approximating} define the above problem as a $\|W\|_{\infty \rightarrow 1}$ norm and provide a semidefinite relaxation: $ maximize  \sum_{ij} W_{ij} {\bf u}_i \cdot {\bf v}_j$ ~s.t. $\|{\bf u}_i\|=\|{\bf v}_j\|=1$. The semidefinite program can be solved within an additive error of $\epsilon$ in polynomial time. Alon and Naor suggested three techniques to round the semidefinite solution into a binary solution ($b_i \in \{\pm 1\}$), which provides an approximation to the original solution up to a constant factor ($K_G$, called {\em Grothendieck's constant},  $1.570 \leq K_G \leq 1.782$). In the Experiments Section we show that our iterative update approach can improve over Alon and Naor's solution when taking their solution as an initial guess. Moreover, taking a random guess as an initial solution provides a final solution that is comparable or better, so the benefit of using a costly approximated solution as initial guess is questionable. 

The minimization in \ref{eq:int_prog} can be equivalently rewritten in a quadratic form:
\begin{equation}
\label{eq:laplacian}
	Minimize ~~~~\frac{1}{2} \sum_{i,j} W_{ij} (b_i -b_j)^2~~s.t.~~b_i\in \{\pm 1\}
\end{equation}
The matrix form of the above minimization reads:
$mimimize~~~\b^t L \b~~s.t.~~\b \in \{\pm 1\}^n$, where $L=D-W$ is the Laplacian of $W$ and  $D$ is a diagonal matrix $D_{ii}=\sum_j W_{ij}$. 
The Laplacian of a graph is frequently used for graph clustering or graph-cut using spectral methods. It was shown that taking the second-smallest eigenvector (the Fiedler vector) and thresholding it at zero provides a relaxed approximation for the minimization in Eq.~\ref{eq:laplacian} (see \citeauthor{von2007tutorial} for more details). However, spectral methods commonly deal with positive weights where the matrix $W$ is guaranteed to be positive semidefinite. This is not the situation in our case where eigenvalues might be negative as well.

\citeauthor{kunegis2009slashdot} \shortcite{kunegis2009slashdot} suggested an alternative for graph Laplacian for signed-graphs: $\bar{L} = \bar{D} - W$, where $\bar{D}_{ii} = \sum_j{\mid W_{ij} \mid }$ and proved that $\bar L$ is positive semidefinite. However, \citeauthor{knyazev2017signed} \shortcite{knyazev2017signed} argue that the signed Laplacian does not give better clustering results than the original definition of Laplacian, even if the graph is signed.  We show in our experiments that neither solution works as well as the greedy update scheme suggested in this paper.

\subsection{Optimizing the Hashing Functions}
\label{sec:hash}
\label{sect:oos}
Finally, we arrive at the out-of-sample extension and explain how to learn hashing functions to encode out-of-sample data points. We found that it is preferable to first optimize for the binary vector $\b^k$, then learn a hashing function $h^k(\x)$ requiring   $h^k(\x_i)=\b^k[i]$, for $i=1..n$.
 Optimizing directly for the hashing function yields a non-linear optimization that often provides inaccurate results. Splitting the optimization into two steps allows each step to be exploited in the best manner. We assume that novel data points will be drawn from the same distribution of the given data $\mathcal{X}$. Therefore, the hashing functions can be optimized using the empirical loss over $\mathcal{X}$. 

We denote by $\tilde \b^k$ the optimal $\b^k$ resulting from the first step. This vector encodes the optimal binary values for the $k^{th}$ bit (over all data points). We then train a binary classifier $h^k(\x ; \Theta)$ over the input pairs $\{(\x_i, \tilde \b^k[i])\}$, by minimizing a loss function:
$$
\min_{\Theta} \sum_{i=1}^n \mathcal{L}(h^k(\x_i,\Theta),\tilde \b^k[i] )
$$
 where $\Theta$ denotes the classifier's parameters. We use kernel SVM \cite{svm} with Gaussian kernels to classify the points $\{ \x_i\}$  into \(\pm1\), but any standard classifier can be applied similarly. 
 At step $k$, we train the hash functions $h^k$ and construct the $k^{th}$ bit for the binary codes: ${\b^k} = [h^k(\x_1),h^k(\x_2),\cdots, h^k(\x_n)]^T$. This bit is updated immediately in the codes $\{ \c_i \}_{i=1}^n$, allowing the $k+1$ bit to account for the errors in $\b^k$.  This  error correcting scheme is another benefit of the two-step solution.

As we proceed, the algorithm adds more bits to the PPC code. Each additional bit is aimed at decreasing the total loss. The process terminates when the total loss is below a given threshold, or when the number of bits exceeds $p$.

\section{Experiments and Results}
\label{sect:exp}

\begin{figure}[tbh]
\includegraphics[width=\columnwidth]{{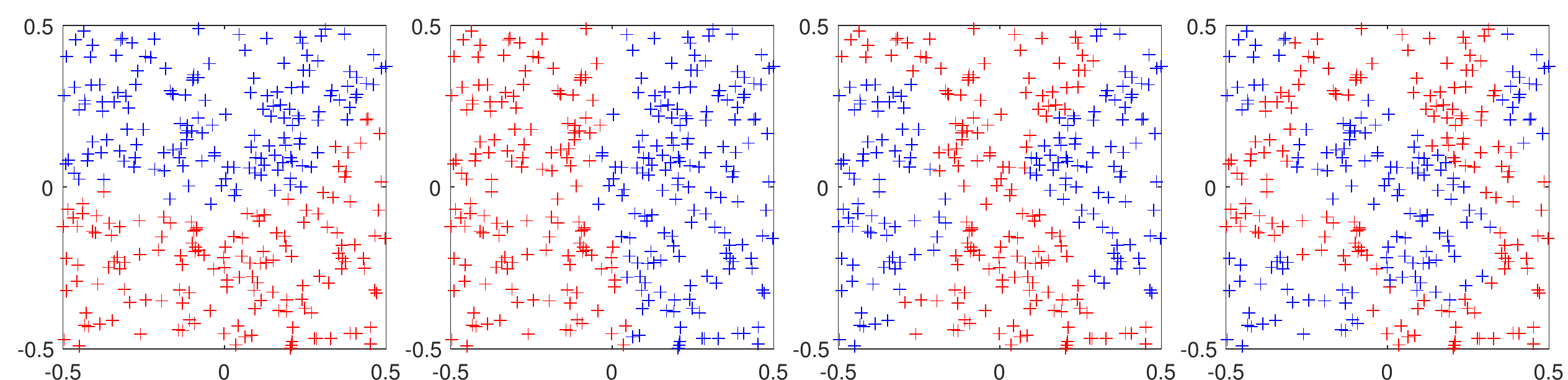}}
\caption[Visualization of first 4 bits of PPC code]{Visualizations of the bits assigned to each data point in a 2-dimensional space according to PPC. In these images, blue points have been assigned 1 while red points have been assigned -1.}
\label{figure:first4bits}
\end{figure}

\begin{figure}[!tb]
    \centering
    \includegraphics[width=0.32\columnwidth]{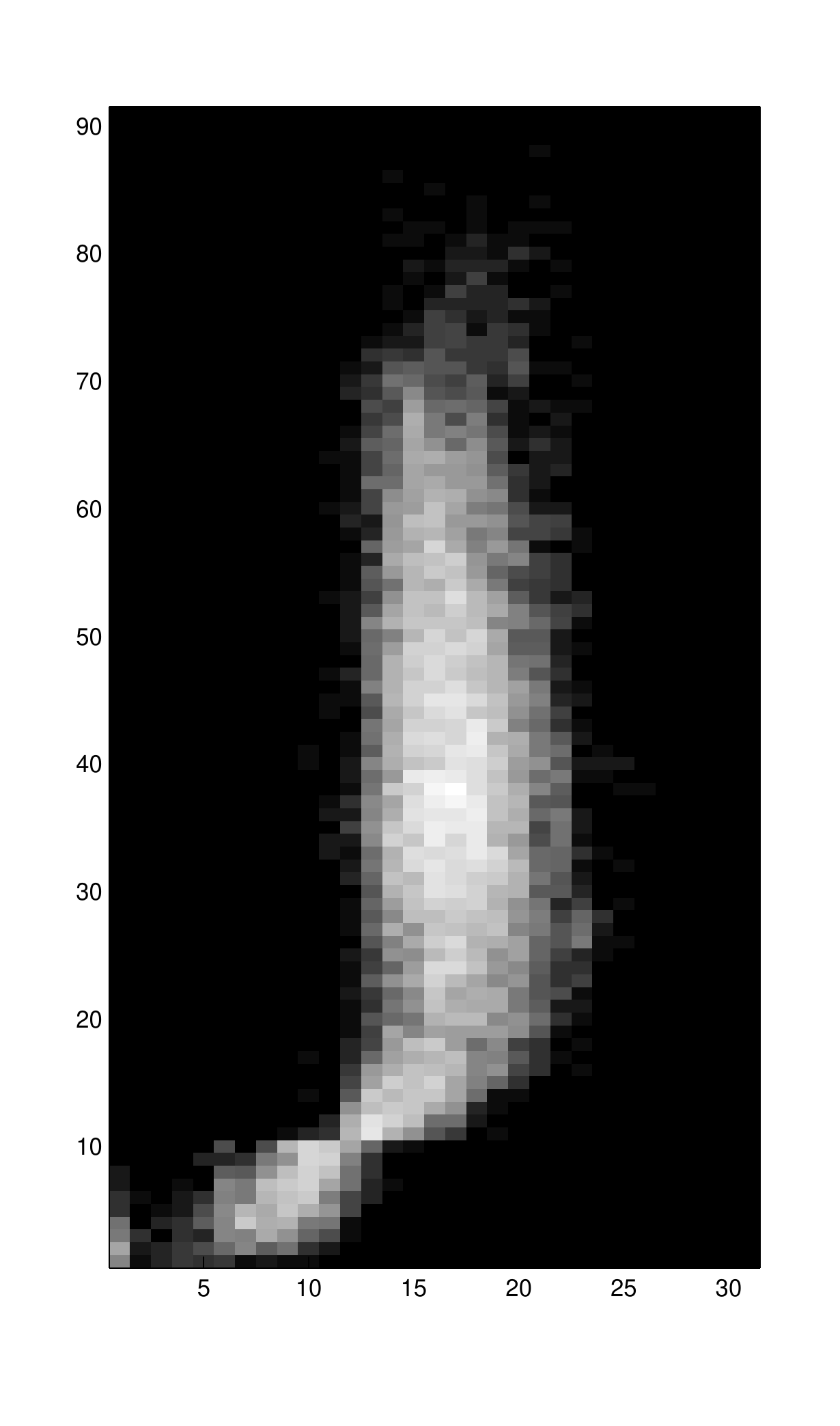}
    \includegraphics[width=0.32\columnwidth]{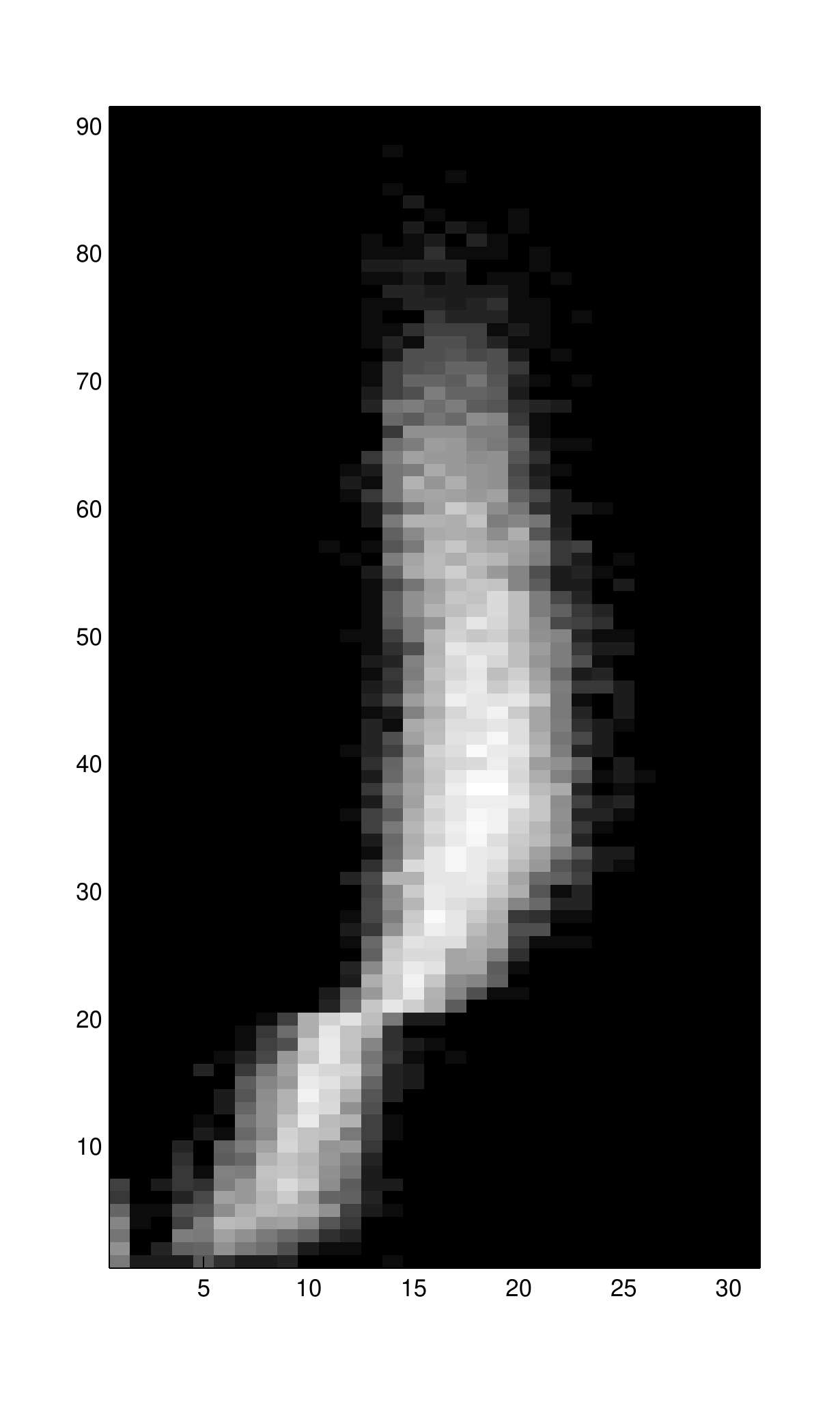}
    \includegraphics[width=0.32\columnwidth]{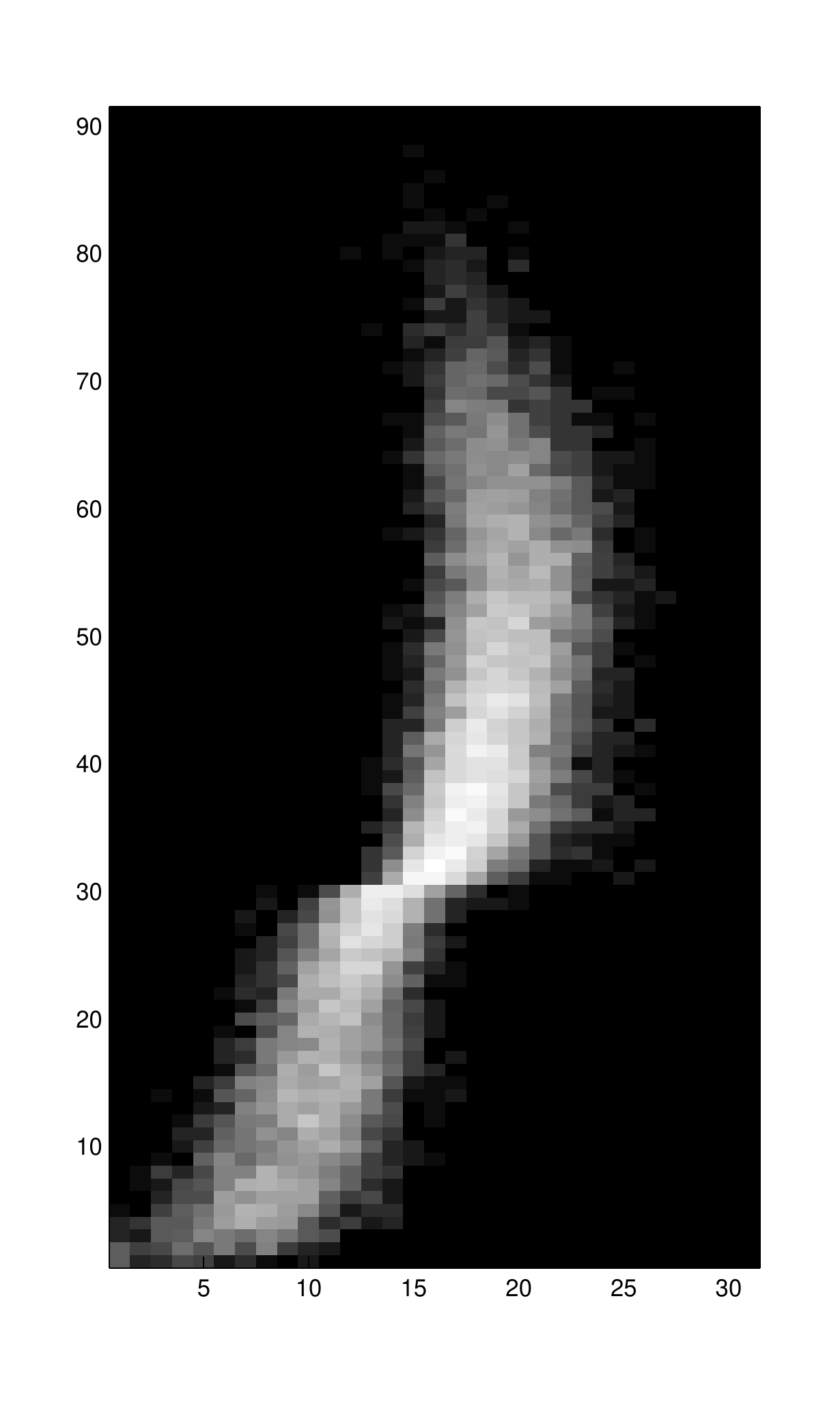}
    \caption[Joint histogram of Hamming distances and Euclidean distances for PPC code]{Joint histogram of Hamming distances (x-axis) with respect to the Euclidean distances (y-axis) for PPC code computed on data with r-neighborhood affinity of (left-to-right) $r=10$, $r=20$ and $r=30$.}
    \label{fig:jhist}
\end{figure}

\begin{figure*}[!tbh]
    \begin{subfigure}[b]{0.32\textwidth}
         \includegraphics[width=\linewidth]{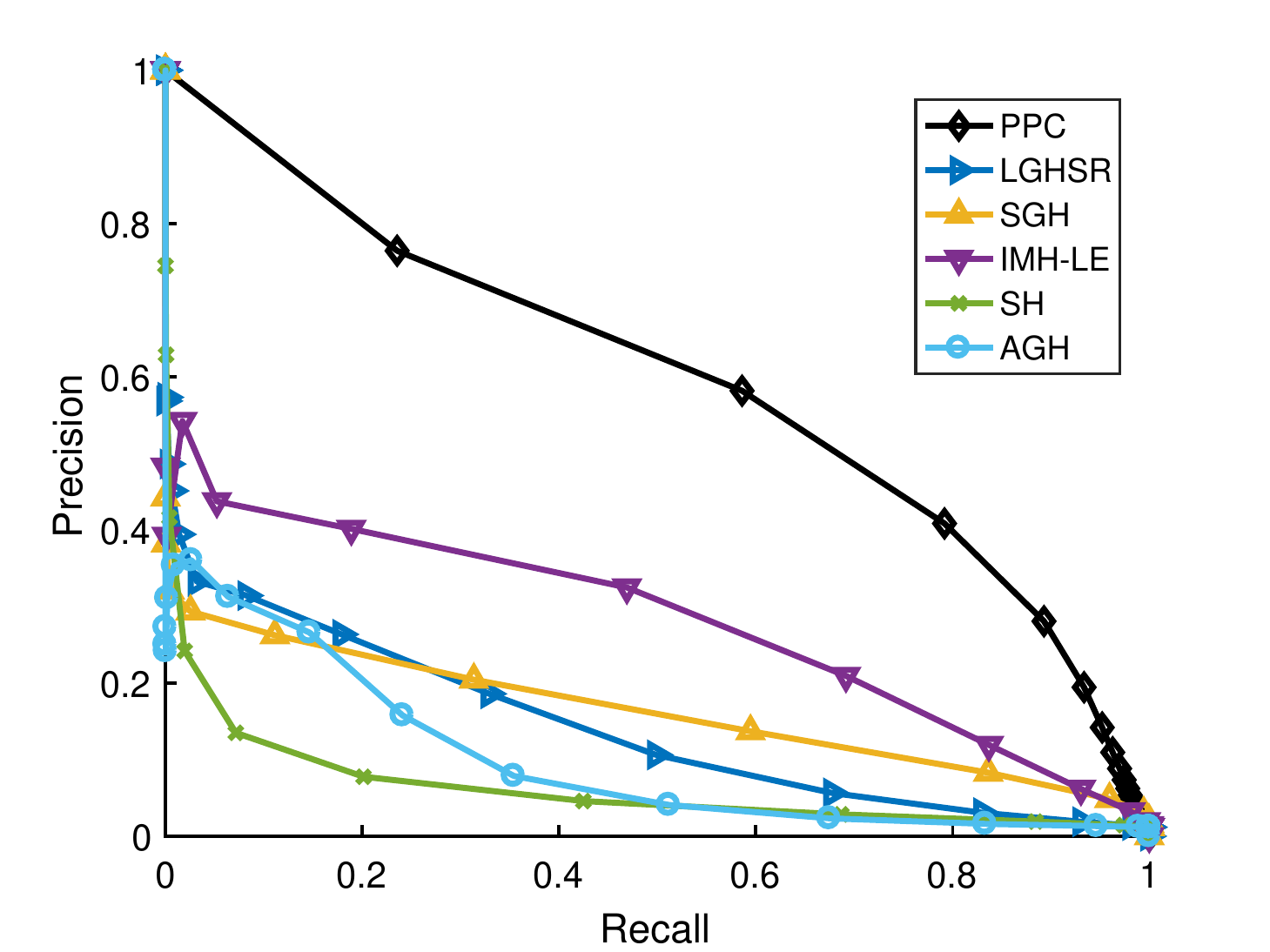}
        \caption{LabelMe (self-supervised) \label{fig:labelme-prec-rec}}
    \end{subfigure}%
    \begin{subfigure}[b]{0.32\textwidth}
         \includegraphics[width=\linewidth]{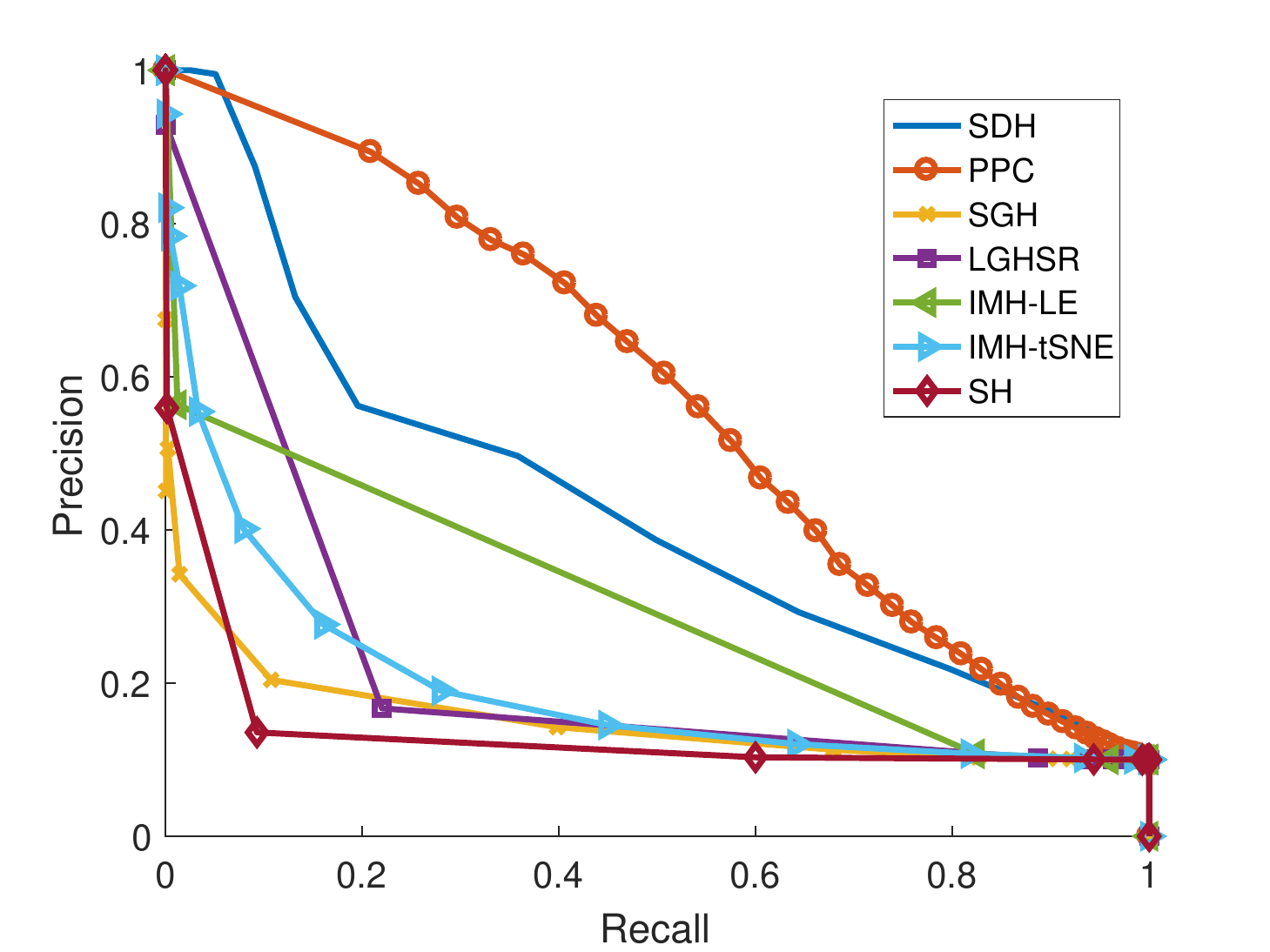}
        \caption{MNIST\label{fig:mnist-prec-rec}}
    \end{subfigure}%
    \begin{subfigure}[b]{0.32\textwidth}
         \includegraphics[width=\linewidth]{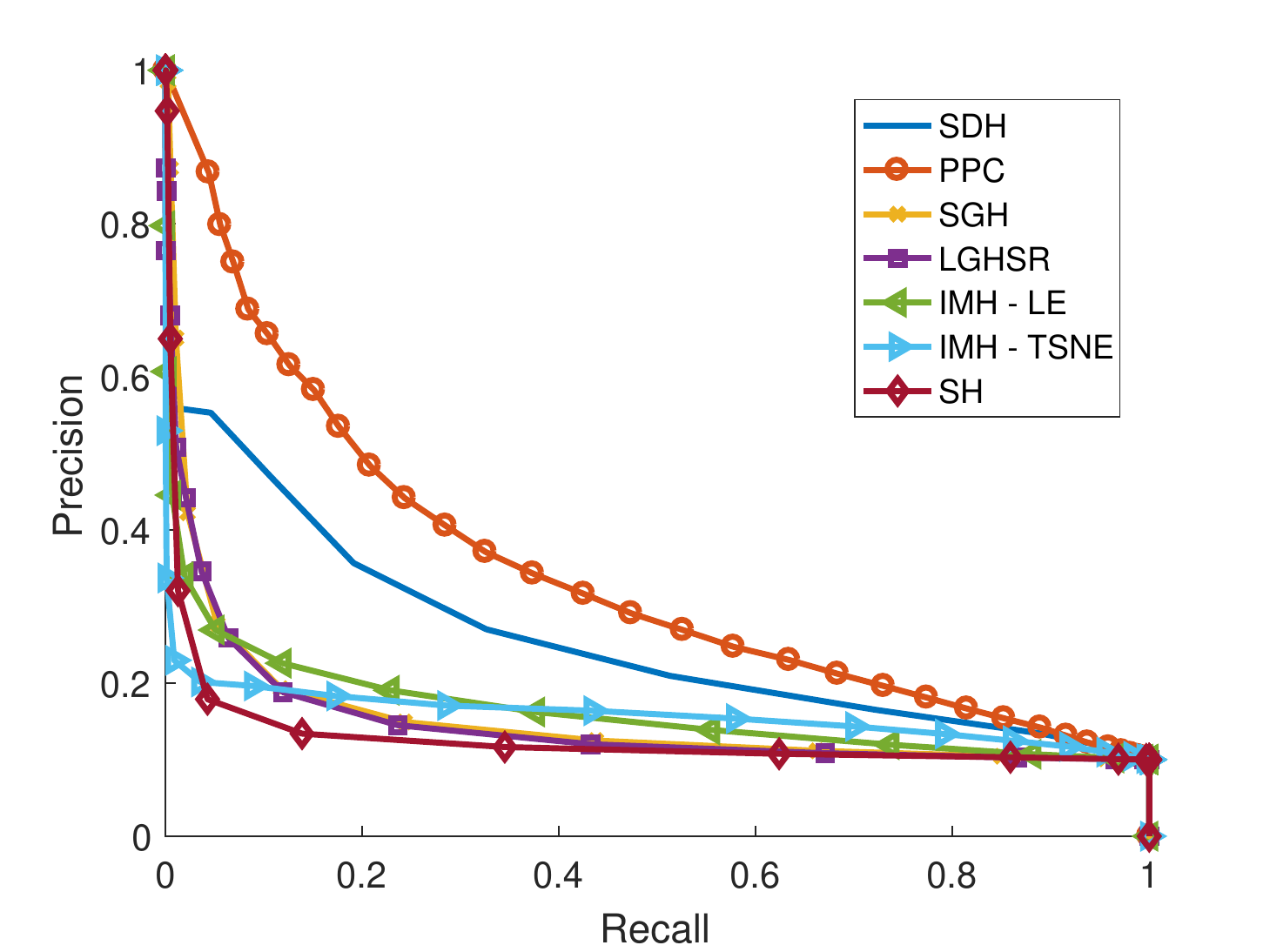}
        \caption{CIFAR-10\label{fig:cifar-prec-rec}}
    \end{subfigure}%

    \caption{Precision-recall of 50 bit codes with varying Hamming threshold $\alpha$ for different datasets. The methods compared: Spectral Hashing (SH) \cite{sh}, Anchor Graph Hashing (AGH) \cite{agh}, Inductive Manifold Hashing (IMH) \cite{imh}, Scalable Graph Hashing (SGH) \cite{sgh}, Supervised Discrete Hashing (SDH) \cite{sdh}, and Large Graph Hashing with Spectral Rotation (LGHSR) \cite{lghsr}.}
    \label{fig:prec-rec}
\end{figure*}

\begin{table*}[!tbh]
\centering
{%
\begin{tabular}{c|cccccccc}
\hline
\textbf{\begin{tabular}[c]{@{}c@{}}AUC\end{tabular}} & \multicolumn{8}{c}{\textbf{CIFAR-10}} \\ \hline
\textbf{Code Length} & \textbf{12} & \textbf{16} & \textbf{24} & \textbf{32} & \textbf{48} & \textbf{64} & \textbf{96} & \textbf{128} \\ \hline
SH & 0.121379 & 0.121498 & 0.119628 & 0.121108 & 0.126965 & 0.12771 & 0.130133 & 0.129777 \\
IMH-tSNE & 0.154169 & 0.165783 & 0.168781 & 0.150094 & 0.165669 & 0.16049 & 0.163685 & 0.174634 \\
IMH-LE & 0.170701 & 0.164687 & 0.144931 & 0.154949 & 0.165396 & 0.152761 & 0.162143 & 0.152876 \\
SGH & 0.129056 & 0.12776 & 0.131998 & 0.138006 & 0.136264 & 0.144698 & 0.153421 & 0.157661 \\
LGHSR & 0.136739 & 0.144933 & 0.149855 & 0.148962 & 0.144178 & 0.144102 & 0.150296 & 0.147022 \\
SDH & 0.249316 & 0.191575 & 0.229962 & 0.250167 & 0.227537 & 0.257303 & 0.288238 & 0.326233 \\
PPC & \textbf{0.28365} & \textbf{0.312302} & \textbf{0.308905} & \textbf{0.329332} & \textbf{0.343186} & \textbf{0.352296} & \textbf{0.354291} & \textbf{0.355555} \\ \hline
\end{tabular}%
}
\caption{Area under the curve of precision-recall for varying code lengths on the CIFAR-10 dataset. The methods compared: Spectral Hashing (SH) \cite{sh}, Inductive Manifold Hashing (IMH) \cite{imh}, Scalable Graph Hashing (SGH) \cite{sgh}, Large Graph Hashing with Spectral Rotation (LGHSR) \cite{lghsr}, Supervised Discrete Hashing (SDH) \cite{sdh}, and our method (PPC).}
\label{table:cifar-test-test}
\end{table*}

\begin{figure*}[!tbh]
\includegraphics[width=\textwidth]{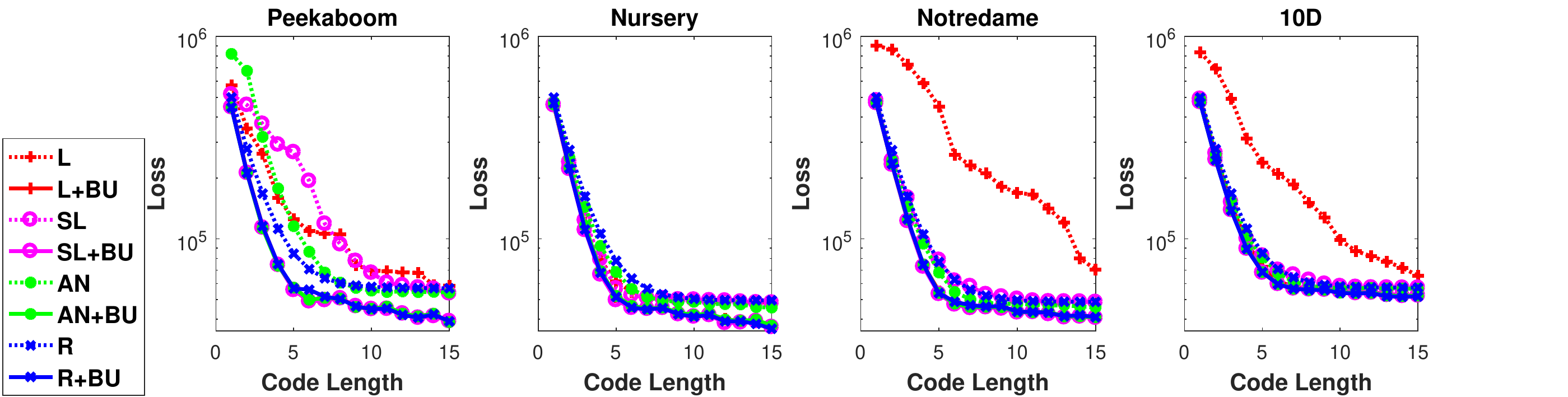}\caption{Loss of binary code as a function of code length for PPC with different initial guesses and iterative improvements. This is shown for 4 small datasets. The initial guesses are: signed eigenvector corresponding to the smallest non-trivial eigenvalue of the Laplacian (L) as described in \citeauthor{von2007tutorial}, signed Laplacian (SL) as suggested by \citeauthor{kunegis2009slashdot}, the sign of the random projection of the 3 smallest non-trivial eigenvalues of the original definition of Laplacian as suggested by \citeauthor{alon2004approximating} (AN), and random guess (R). We show here results for PPC using only the initial guess, and bit update (BU). \label{fig:insample-loss}}
\end{figure*}

\begin{figure*}[htb]
\includegraphics[width=\textwidth]{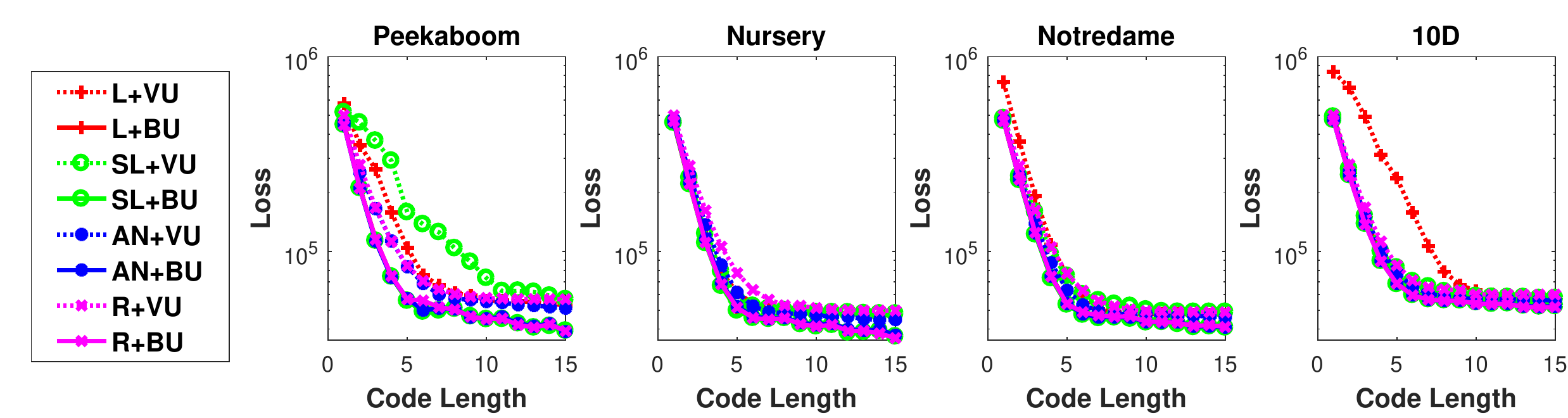}\caption{Loss of binary code as a function of code length for PPC with different initial guesses and iterative improvements. This is shown for 4 small datasets. Here we compare the effects of bit update (BU), and vector update (VU). The initial guesses are: signed eigenvector corresponding to the smallest non-trivial eigenvalue of the Laplacian (L) as described in \citeauthor{von2007tutorial}, signed Laplacian (SL) as suggested by \citeauthor{kunegis2009slashdot}, the sign of the random projection of the 3 smallest non-trivial eigenvalues of the original definition of Laplacian as suggested by \citeauthor{alon2004approximating} (AN), and random guess (R). \label{fig:insample-loss-vubu}}
\end{figure*}

To illustrate the optimization process, a synthetic example is shown in Figure~\ref{figure:first4bits}. In this figure 300 points are drawn in 2D in a range of $[-0.5..0.5]\times[-0.5..0.5]$. The figure shows the first 4 first bits (red/blue indicate $\pm 1$) where the proximity matrix was generated with a r-neighborhood proximity measure. This demonstrates how the PPC algorithm tries to separate the vector space into two labels by balancing between areas with high neighbor density and correcting for the errors of previous bits.

The mutual relationships between the actual distances and the Hamming distances are illustrated in Figure~\ref{fig:jhist}, which shows the joint histogram of $d_H(\c_i,\c_j)$ vs. the Euclidean distances, $d_{ij}$ for three cases. This is another synthetic 2D example with varying r-neighborhood proximity measures. The x-axis indicates the Hamming distances of the generated code and the y-axis indicates the actual Euclidean distances. The histograms are plotted as gray-scale images where the gray-value in each entry indicates the number of pairs with the associated distances.
The brighter the gray-value, the greater the number of pairs (we display the log of the actual values for a better visualization). For each case we see that most of the pairs with Euclidean distance below $r=10,20,30$ (the pairs labeled as "Near" in this example) are concentrated to the left of the respective Hamming distances, $d_H=11,12,13$. These were also the final $\alpha$ values at the last step of each case. 
It is interesting to note that the conditional distributions of $d_H$ at the two sides of the alpha values are wide, while the order of the Euclidean distances is not necessarily preserved in the respective Hamming distances. This indicates that the bits are allocated solely to optimize the neighborhood constraints, and not to meet any other requirements such as preserving the ordinal distances. This allows for optimal allocation of the bit resources. 

We evaluate Proximity Preserving Code on several public datasets: MNIST \cite{mnist}, CIFAR-10 \cite{cifar}, and LabelMe \cite{labelme}. 
CIFAR-10 \cite{cifar} is a labeled subset of the 80 million tiny images dataset, consisting of 60,000 32x32 color images represented by 512-dimensional GIST feature vectors \cite{GIST}. It is split into 59,000 images in the training set and 1000 in the test set.
MNIST \cite{mnist} is the well-known database of handwritten digits in grayscale images of size 28x28. The dataset is split into a training set of 69,000 samples and a test set of 1,000 samples. 
LabelMe \cite{labelme} has 20,019 training images and 2000 test images, each with a 512D GIST descriptor. The descriptors were dimensionality reduced to 40D using PCA. We use this dataset as unsupervised, and the affinity is defined by thresholding in the Euclidean GIST space such that each training point has an average of 100 neighbors.

We evaluate the results by computing a precision-recall graph of varying Hamming thresholds (denoted by $\alpha$ in Equation \ref{eq:lij}). We compare our methods to the following state-of-the-art spectral hashing methods: Spectral Hashing (SH) \cite{sh}, Anchor Graph Hashing (AGH) \cite{agh}, Inductive Manifold Hashing (IMH) \cite{imh}, Scalable Graph Hashing (SGH) \cite{sgh}, Supervised Discrete Hashing (SDH) \cite{sdh}, and Large Graph Hashing with Spectral Rotation (LGHSR) \cite{lghsr}. We use the default settings that the authors provided, and as in \citeauthor{imh} \shortcite{imh} we use settings of anchor number $m=300$ and neighborhood number $s=3$. For IMH, we show results using both Laplacian eigenmaps (LE) and t-SNE. 

We first compare our method in the unsupervised (or self-supervised) setting to the unsupervised methods listed above. Results are shown in Figure \ref{fig:labelme-prec-rec}. We show the precision-recall graph of the LabelMe dataset with the self-supervised affinity labels. The results are for the 50 bit code computed for the train set vs. the test set. The results clearly show that our code is more accurate than the other methods over {\em all } Hamming thresholds.

Next, we compare our method in the supervised scenario. Figure \ref{fig:mnist-prec-rec} shows the precision-recall of 50 bit codes for the MNIST dataset. The results are computed for the test set only, showing that our method outperforms the other methods in the more challenging out-of-sample scenario. Similarly, Figure \ref{fig:cifar-prec-rec} shows the comparison for the CIFAR-10 dataset.

To compare performance at different code lengths, we calculate the area under curve (AUC) for the precision-recall graph in the out-of-sample scenario. Table \ref{table:cifar-test-test} shows our results on the CIFAR-10 dataset, compared to the results of the spectral methods mentioned above. Our method consistently outperforms the other methods in both short and long codes.

%
%

Our solution for the signed min-cut problem includes an iterative scheme that continuously improves the initial guess. As mentioned before, we argue that the initial guess does not play a significant role in the final solution. In fact, at the end of the iterative process, an initial guess based on spectral methods provides similar results to a random initial guess. 

In the following experiment, we compute the codes only for the in-sample points (using a fixed random seed) and plot the loss as shown in Equation \ref{eq:lossL} at each code length. We show our results on the benchmark presented in \citeauthor{mlh} \shortcite{mlh} for six small datasets, consisting of 1000 training points.  Since we use the full versions of the MNIST and LabelMe datasets in the previous sections, we show here the four remaining datasets. 
We present the results generated from the following initial guesses: signed eigenvector corresponding to the smallest non-trivial eigenvalue of the Laplacian (L)  \cite{von2007tutorial}, signed Laplacian (SL)  \cite{kunegis2009slashdot}, the sign of the random projection of the 3 smallest non-trivial eigenvalues of the Laplacian  \cite{alon2004approximating} (AN),  and random guess (R). We show the effect of improving upon the initial guesses using bit update (BU) as presented in Algorithm \ref{alg2}. Results are shown in Figure \ref{fig:insample-loss}. The results clearly show that the random guess with bit update performs as well as or surpasses the costly spectral computations, while the bit update improves upon all of the initial guesses. 

A comparison between vector update (Algorithm \ref{alg1}) and bit update (Algorithm \ref{alg2}) for different initial guesses is shown in Figure \ref{fig:insample-loss-vubu}. It is clear that the bit update method outperforms the vector update. This is reasonable as the bit update is an optimization with smaller steps, allowing for a broader search for the optimum, whereas the vector update takes large steps and converges quickly into a local optimum.

\section{Conclusions}
We have shown a binary hashing method called Proximity Preserving Code (PPC) based on the signed graph-cut problem. We propose an approximation to this problem and show its advantages over other methods suggested in the literature. We also introduce a hashing framework that can work for both supervised and unsupervised datasets. The framework computes binary code that is more accurate than state-of-the-art graph hashing algorithms, especially in the challenging out-of-sample scenario. We believe the use of the signed graph problem instead of relaxation to the standard graph problem can prove beneficial in other algorithms as well.

\appendix
\section{Appendix}
\setcounter{theorem}{0}
The following four theorems are used in the Vector Update method in Algorithm \ref{alg1}. The proofs are provided below.

\begin{theorem} 
for any $n \times n$ matrix $W$ and $\b,\b' \in \{\pm 1\}^n$, ~if ~$\b'=sign(W \b)$,  then $\b'^T W \b \geq \b^T W \b$
\end{theorem}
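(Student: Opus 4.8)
The plan is to reduce the claim to the elementary identity $\mathrm{sign}(x)\,x = |x|$ applied coordinatewise. First I would set $\v = W\b \in \mathbb{R}^n$, so that the two quantities to be compared become linear functionals evaluated at $\v$: the right-hand side is $\b^T W\b = \b^T\v = \sum_{i=1}^n \b[i]\,\v[i]$, and the left-hand side is $\b'^T W\b = \b'^T\v = \sum_{i=1}^n \b'[i]\,\v[i]$.

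Next I would evaluate the left-hand side exactly. Since $\b' = \mathrm{sign}(W\b) = \mathrm{sign}(\v)$, we have $\b'[i] = \mathrm{sign}(\v[i])$ for every $i$, hence $\b'[i]\,\v[i] = |\v[i]|$, and therefore $\b'^T W\b = \sum_{i=1}^n |\v[i]|$. For the right-hand side I would bound each term using $|\b[i]| = 1$: $\b[i]\,\v[i] \le |\b[i]|\,|\v[i]| = |\v[i]|$. Summing over $i$ gives $\b^T W\b \le \sum_{i=1}^n |\v[i]| = \b'^T W\b$, which is exactly the claim.

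The only point requiring a remark is the convention for $\mathrm{sign}(0)$: if some coordinate $\v[i] = 0$, the hypothesis $\b' \in \{\pm 1\}^n$ forces an arbitrary choice $\b'[i] \in \{\pm 1\}$, but then $\b'[i]\,\v[i] = 0 = |\v[i]|$ regardless, so the identity $\b'^T W\b = \sum_i |\v[i]|$ still holds and the argument is unaffected. I do not expect any genuine obstacle here — the statement is essentially the observation that, among all sign vectors, $\mathrm{sign}(\v)$ is the one maximizing the inner product with $\v$; note in particular that symmetry or positive semidefiniteness of $W$ is not needed for this theorem (those hypotheses enter only in Theorems~2 and~3).
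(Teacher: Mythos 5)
Your proof is correct and follows essentially the same route as the paper: both set $\v=W\b$ and observe that among all $\pm 1$ vectors, $\mathrm{sign}(\v)$ maximizes the inner product with $\v$. Your explicit handling of the $\mathrm{sign}(0)$ convention and the remark that no symmetry or PSD assumption is needed are fine additional details, but the argument is the same.
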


\begin{proof}
Denote $\u=W\b$. Thus $\b'^T W \b = \b'^T \u = \sum_i \b'[i] \u[i]$. Since $\b'[i]\in \pm 1$, the maximal value is obtained when $\b'[i]=sign(\u[i])$. We arrive at $\b'^T W \b \geq \b^T W \b$.
\end{proof}

\begin{theorem}
Assuming  $ W $ is positive semidefinite (PSD), if ${\b'}^TW \b > \b^TW \b$, then 
$ {\b'}^TW{\b'}>\b^TW\b $.
\end{theorem}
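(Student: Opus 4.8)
The plan is to exploit the one fact we have not yet used, namely that $W$ is PSD, by applying the defining quadratic-form inequality to the \emph{difference} vector $\b'-\b$ rather than to $\b$ or $\b'$ individually. Concretely, since $W$ is PSD (and hence symmetric, as is the case for our weight matrix with $W_{ij}=W_{ji}$), we have $(\b'-\b)^T W (\b'-\b)\ge 0$. Expanding the left-hand side and combining the two cross terms via symmetry gives
\[
\b'^T W \b' - 2\,\b'^T W \b + \b^T W \b \ \ge\ 0,
\]
that is, $\b'^T W \b' \ge 2\,\b'^T W \b - \b^T W \b$.

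The second step is a purely algebraic rearrangement of the right-hand side: write
\[
2\,\b'^T W \b - \b^T W \b \ =\ \b^T W \b + 2\bigl(\b'^T W \b - \b^T W \b\bigr).
\]
By the hypothesis $\b'^T W \b > \b^T W \b$, the bracketed quantity is strictly positive, so the right-hand side is strictly larger than $\b^T W \b$. Chaining this with the inequality from the first step yields $\b'^T W \b' \ge 2\,\b'^T W \b - \b^T W \b > \b^T W \b$, which is exactly the claim.

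There is essentially no hard part here; the only point that needs care is that the PSD hypothesis is understood in the usual sense for symmetric $W$, so that the two cross terms in the expansion of $(\b'-\b)^T W(\b'-\b)$ genuinely coincide and can be merged — otherwise one would only get $\b'^T W\b + \b^T W\b'$ in the middle, and the argument would not close. Since the weight matrices arising in our construction (Equation~\ref{eq:bWb}) are symmetric, and Theorem~\ref{theo:making-PD} adds only a multiple of the identity, this assumption is harmless. Note also that strictness is preserved correctly: the PSD step contributes a (possibly non-strict) $\ge$, while the strict inequality is injected entirely by the hypothesis in the rearrangement step, so the final conclusion is strict.
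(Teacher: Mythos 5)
Your proof is correct, but it takes a genuinely different route from the paper. The paper's argument expands $\b$ and $\b'$ in an orthonormal eigenbasis of $W$, forms the auxiliary vectors with components $\alpha_i\sqrt{\lambda_i}$ and $\alpha'_i\sqrt{\lambda_i}$ (using $\lambda_i\ge 0$ to take square roots), and then applies the Cauchy--Schwarz inequality $\|\v\|\,\|\v'\|\ge \v^T\v'\ge\|\v\|^2$ to conclude $\|\v'\|\ge\|\v\|$, i.e.\ ${\b'}^TW\b'\ge\b^TW\b$. You instead apply positive semidefiniteness directly to the difference vector, $(\b'-\b)^TW(\b'-\b)\ge 0$, and rearrange: this avoids the eigendecomposition and Cauchy--Schwarz entirely, is shorter, and handles strictness more cleanly --- the strict inequality enters only through the hypothesis ${\b'}^TW\b>\b^TW\b$, so you genuinely obtain ${\b'}^TW\b'>\b^TW\b$, whereas the paper's chain as written concludes only with $\ge$ and needs a small extra remark (e.g.\ ruling out $\|\v\|=0$) to recover the strict claim of the theorem. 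Both arguments use symmetry of $W$ --- the paper implicitly, via the orthonormal eigenbasis, and you explicitly, to merge the two cross terms --- and, as you note, the weight matrices in the algorithm are symmetric and Theorem~\ref{theo:making-PD} only adds a multiple of the identity, so this is harmless.
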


\begin{proof}
We express vectors \(\b,{\b'}\) using the eigenvectors of W, \(W\u_i=\lambda_i \u_i\), \(\b=\sum\alpha_i \u_i\) and \({\b'}=\sum_i{\alpha'_i} \u_i\). Given
\begin{eqnarray}
{\b'}^TW\b=(\sum_i {\alpha'_i}\u_i^T)(\sum_i \alpha_i \lambda_i \u_i)= \\ 
= \sum_i {\alpha'_i}\alpha_i\lambda_i\geq\sum_i\alpha_i^2\lambda_i=\b^TW\b	
\end{eqnarray}

we would like to show that
\[{\b'}^T W{\b'} = \sum_i {\alpha'_i}^2\lambda_i\geq\sum_i\alpha_i^2\lambda_i =\b^TW\b\]

Using the PSD property of W (\(\lambda_i\geq0\)), we define 
$$\v=\begin{bmatrix}\alpha_1\sqrt{\lambda_1} \\ \vdots \\ \alpha_N\sqrt{\lambda_N}\end{bmatrix}\,{\v'}=\begin{bmatrix}{\alpha'_1}\sqrt{\lambda_1} \\ \vdots \\ {\alpha'_N}\sqrt{\lambda_N}\end{bmatrix} $$
Starting from our original assumption we have:
$ \v^T{\v'}\geq \v^T\v $. Using the triangular inequality we have:
$$\|\v\|\cdot\|{\v'}\|\geq \v^T {\v'}\geq \|\v\|^2$$
which follows that $ \|{\v'}\|\geq \|\v\|$ and accordingly 
$${\v'}^T {\v'} = {\b'}^T W{\b'} \geq {\b}^T W{\b} = \v^T\v $$
\end{proof}

\begin{theorem}
A symmetric matrix $W$ can become positive semidefinite by applying $W\leftarrow W+|\lambda| I$ where $\lambda$ is the smallest eigenvalue of $W$.
\end{theorem}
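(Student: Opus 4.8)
The plan is to use the spectral theorem for real symmetric matrices together with the elementary fact that adding a scalar multiple of the identity shifts every eigenvalue by that scalar while leaving the eigenvectors untouched. First I would recall that a symmetric $W$ admits an orthonormal eigendecomposition $W=\sum_i \lambda_i \u_i \u_i^T$ with all $\lambda_i\in\mathbb{R}$, and order them $\lambda=\lambda_1\le\lambda_2\le\cdots\le\lambda_n$ so that ``the smallest eigenvalue'' $\lambda$ is well defined. The key observation is then $W+|\lambda| I=\sum_i(\lambda_i+|\lambda|)\,\u_i\u_i^T$, so the modified matrix is again symmetric with the same eigenvectors and with eigenvalues $\lambda_i+|\lambda|$.

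From here I would split into the two sign cases for $\lambda$. If $\lambda\ge 0$, then $W$ is already PSD and each shifted eigenvalue $\lambda_i+|\lambda|=\lambda_i+\lambda\ge 0$. If $\lambda<0$, then $|\lambda|=-\lambda$ and each shifted eigenvalue is $\lambda_i-\lambda\ge\lambda_1-\lambda_1=0$ since $\lambda_1$ is the minimum. In both cases every eigenvalue of $W+|\lambda| I$ is nonnegative, which for a symmetric matrix is exactly the definition of positive semidefiniteness.

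Equivalently — and this is probably the cleanest version to put in the paper — I could argue directly via the Rayleigh quotient: for any $\x\in\mathbb{R}^n$,
$$
\x^T\!\left(W+|\lambda| I\right)\x=\x^T W\x+|\lambda|\,\|\x\|^2\ge \lambda\|\x\|^2+|\lambda|\,\|\x\|^2=(\lambda+|\lambda|)\|\x\|^2\ge 0,
$$
where the first inequality is the variational bound $\x^T W\x\ge\lambda_{\min}(W)\,\|\x\|^2$ with $\lambda_{\min}(W)=\lambda$, and the final inequality holds because $\lambda+|\lambda|\in\{0,\,2\lambda\}$ is nonnegative regardless of the sign of $\lambda$. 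There is no genuine obstacle in this proof; the only points that need a word of care are that symmetry is what guarantees a \emph{real} spectrum (so the statement even makes sense), and that the diagonal shift preserves the eigenvectors exactly — the very property that the subsequent theorem on adding a constant to the diagonal relies on.
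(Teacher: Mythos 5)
Your proof is correct, but it takes a different route from the paper. You argue via the spectral theorem: $W=\sum_i\lambda_i\u_i\u_i^T$, so $W+|\lambda|I$ has the same eigenvectors with eigenvalues $\lambda_i+|\lambda|\ge 0$, or equivalently via the Rayleigh-quotient bound $\x^TW\x\ge\lambda\|\x\|^2$. The paper instead invokes the Gershgorin Circle Theorem: it defines the row sums $R_i=\sum_{j\neq i}|w_{ij}|$, notes that all eigenvalues lie in the disks $\{v:|v-w_{ii}|\le R_i\}$, and claims that adding $|\lambda|$ to the diagonal pushes the disks into the nonnegative half-line. Your argument is the more robust of the two: the eigenvalue-shift identity $\mathrm{spec}(W+cI)=\{\lambda_i+c\}$ gives the conclusion exactly, with no slack, and it also makes transparent why the eigenvectors (and hence the subsequent argmax in Theorem~4) are unaffected. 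The Gershgorin route, as written in the paper, is actually looser: shifting the diagonal by $|\lambda|$ moves each disk's left endpoint to $w_{ii}+|\lambda|-R_i$, and Gershgorin only guarantees $|\lambda|\le\max_i(R_i-w_{ii})$ (when $\lambda<0$), not the reverse inequality $|\lambda|\ge R_i-w_{ii}$ that would be needed to conclude the shifted disks are nonnegative; so the paper's proof needs the very spectral-shift fact you use to be airtight. In short, your version is both more elementary in its dependencies and more rigorous; the paper's buys a statement about disk locations that is not actually strong enough on its own.
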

\begin{proof} According to the Gershgorin Circle Theorem \cite{gct}, 
for an $n \times n$ matrix W, define 
$R_i = \sum_{j=1,j\neq i}^n |w_{ij}|$. 
All eigenvalues of $W$ are in at least one of the disks $\{v: |v-w_{ii}| \leq R_i \}$.
Therefore, by adding the smallest eigenvalue to $w_{ii}$, the disks will only contain values greater than or equal to zero.
\end{proof}

\begin{theorem}
Adding a constant value to the diagonal of the weight matrix W will not affect the output code computed.
\end{theorem}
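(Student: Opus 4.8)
The plan is to show that the optimization problem in Equation~\ref{eq:bWb}, namely $\argmax_{\b \in \{\pm 1\}^n} \b^T W \b$, has the same set of optimizers when $W$ is replaced by $W + cI$ for any constant $c$. First I would write out the objective for the shifted matrix: for any $\b \in \{\pm 1\}^n$,
\[
\b^T (W + cI) \b = \b^T W \b + c\, \b^T \b = \b^T W \b + c\, n,
\]
where the last equality uses that $\b \in \{\pm 1\}^n$ forces $\b^T \b = \sum_{i=1}^n b_i^2 = n$.

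The key observation is then immediate: the objective for $W + cI$ differs from the objective for $W$ by the additive constant $c\,n$, which does not depend on the choice of $\b$. Hence $\argmax_{\b} \b^T (W+cI)\b = \argmax_{\b} \b^T W \b$, so the bit vector $\b^k$ produced by the optimization — and therefore the $k^{th}$ bit of the output code, and by induction the entire code $\{\c_i\}$ — is unchanged. The same remark applies verbatim to the equivalent min-cut formulation in Equation~\ref{eq:int_prog}, since $\frac12 \sum_{ij} W_{ij}(1 - b_i b_j)$ and $\b^T W \b$ differ only by terms depending on $\mathrm{diag}(W)$ and a global constant.

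I would also note why this matters operationally: in Algorithm~\ref{alg1} we replace $W$ by $W + |\lambda| I$ to make it positive semidefinite (justified by Theorem~\ref{theo:making-PD}), which is needed so that Theorem~\ref{alternate} applies and the vector-update iteration $\b' = \mathrm{sign}(W\b)$ is guaranteed not to decrease the objective; the present theorem certifies that this preprocessing step is "free" in the sense that it does not alter the solution we are ultimately computing. Strictly, one should also check that the iteration itself behaves identically — but $\mathrm{sign}((W + cI)\b) = \mathrm{sign}(W\b + c\b)$ need not equal $\mathrm{sign}(W\b)$ entrywise, so the cleanest argument is the one above at the level of the optimization target rather than the trajectory: both the shifted and unshifted problems have the same global maximizers, and the guarantee is that the iteration converges to a local maximum of the (shifted) objective, whose maximizers coincide with those of the original.

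There is no real obstacle here; the only thing to be careful about is the distinction between "the optimization problem has the same optimizers" (true, and all that is needed) versus "the iterative algorithm follows the same path" (false in general), so the statement should be — and is — phrased in terms of the output code rather than the intermediate iterates.
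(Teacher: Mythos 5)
Your proof is correct and follows essentially the same route as the paper: since $\b \in \{\pm 1\}^n$ forces $\b^T\b = n$, the shifted objective $\b^T(W+cI)\b = \b^TW\b + cn$ differs from the original by a constant, so the maximizers of Equation~\ref{eq:bWb} coincide. Your additional remark distinguishing invariance of the optimization target from invariance of the iteration trajectory is a fair caveat but does not change the argument.
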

\begin{proof} In PPC, we optimize the vector $\b$ according to Equation \ref{eq:bWb}. Therefore:
\begin{gather*}
argmax_{\b} \b^T(W+|\lambda| I)\b \\
= argmax_{\b} \b^TW\b+|\lambda| \b^T\b \\
= argmax_{\b} \b^TW\b+|\lambda| n \\
= argmax_{\b} \b^TW\b 
\end{gather*}
\end{proof}

\bibliographystyle{aaai}
\bibliography{AAAI-LaviI.5779}
\end{document}